\definecolor{olive}{rgb}{0.6, 0.6, 0.2}
\definecolor{sand}{rgb}{0.8666666666666667, 0.8, 0.4666666666666667}
\definecolor{wine}{rgb}{0.5333333333333333, 0.13333333333333333, 0.3333333333333333}
\definecolor{cvprblue}{rgb}{0.21,0.49,0.74}
\definecolor{electricindigo}{rgb}{0.44, 0.0, 1.0}
\newcommand{\Jiahao}[1]{{\color{green} #1}}
\definecolor{deblue}{RGB}{11,132,147}
\definecolor{ocra}{RGB}{204, 119, 34}
\newcommand{\fcircle}[2][red,fill=red]{\tikz[baseline=-0.5ex]\draw[#1,radius=#2] (0,0.03) circle ;}
\newcommand{\PhysGTN}{\textcolor{black}{{\fontfamily{cmtt}\selectfont HAMLET}}}
\theoremstyle{plain}
\newtheorem{theorem}{Theorem}[section]
\newtheorem{proposition}[theorem]{Proposition}
\theoremstyle{definition}
\newtheorem{definition}[theorem]{Definition}
\theoremstyle{remark}
\icmltitlerunning{\PhysGTN\-: Graph Transformer Neural Operator for Partial Differential Equations}
\begin{document}

\twocolumn[
\icmltitle{\PhysGTN\-: Graph Transformer Neural Operator for Partial Differential Equations}
%



\icmlsetsymbol{equal}{*}

\begin{icmlauthorlist}
\icmlauthor{Andrey Bryutkin}{equal,mmm}
\icmlauthor{Jiahao Huang}{equal,comp}
\icmlauthor{Zhongying Deng}{yyy}
\icmlauthor{Guang Yang}{comp}
\icmlauthor{Carola-Bibiane Schönlieb}{yyy}
\icmlauthor{Angelica Aviles-Rivero}{yyy}
\end{icmlauthorlist}

\icmlaffiliation{yyy}{Department of Applied Mathematics and Theoretical Physics, University of Cambridge,  UK}
\icmlaffiliation{comp}{Bioengineering Department and Imperial-X,  National Heart and Lung Institute  \& Cardiovascular Research Centre, Imperial College London, UK.}
\icmlaffiliation{mmm}{Department of Mathematics, MIT,  USA}

\icmlcorrespondingauthor{Andrey Bryutkin}{bryutkin@mit.edu}
\icmlcorrespondingauthor{Jiahao Huang}{j.huang21@imperial.ac.uk}

\icmlkeywords{PDEs, Neural Operator, Deep Learning}

\vskip 0.3in
]



\printAffiliationsAndNotice{\icmlEqualContribution} 

\begin{abstract}
We present a novel graph transformer framework, \PhysGTN, designed to address the challenges in solving partial differential equations (PDEs) using neural networks. The framework uses graph transformers with modular input encoders to directly incorporate differential equation information into the solution process. This modularity enhances parameter correspondence control, making \PhysGTN\ adaptable to PDEs of arbitrary geometries and varied input formats. Notably, \PhysGTN\ scales effectively with increasing data complexity and noise, showcasing its robustness. \PhysGTN\ is not just tailored to a single type of physical simulation, but can be applied across various domains. Moreover, it boosts model resilience and performance, especially in scenarios with limited data. We demonstrate, through extensive experiments, that our framework is capable of outperforming current techniques for PDEs. 
\end{abstract}

\section{Introduction}\label{sec:introduction}%
Deep learning has revolutionised all domains, including the solution of partial differential equations (PDEs). Traditional numerical methods for solving PDEs can be computationally expensive, especially for high-dimensional problems or those with complex geometries. In recent years, a number of approaches have been developed to leverage the power of deep neural networks for PDEs, including Physics-Informed Neural Networks (PINNs) e.g.~\cite{raissi_physics-informed_2019,karniadakis2021physics_informed_machine_learning} and Neural Operators e.g.~\cite{li_fourier_2022,lu_deeponet_2021}. These methods use neural networks to learn the solution to a PDE directly from the data, without the need for discretisation or meshing. Deep learning for PDEs is largely changing many fields of science and engineering, from fluid dynamics and electromagnetics to finance and healthcare~\cite{liu2019multi,kissas2020machine,sahli2020physics,sun2020surrogate,raissi2019deep}.

Although PINNs and their variants have shown success in solving PDEs, they still have limitations. These include limited generalisability across multiple PDE instances, a lack of discretisation invariance, and an inability to generalise beyond a specific resolution/geometry observed during training~\cite{wang2021understanding,fuks2020limitations}. A body of researchers has investigated alternatives to mitigate such limitations, where principles driven by neural operators are considered. These neural operators directly approximate the differential operator that underlies the PDE and have yielded promising outcomes in recent years, especially for multi-instance learning. However, significant research has yet to be conducted in this area.


\textbf{Contributions.} We present a novel framework called grap\textbf{H} tr\textbf{A}nsfor\textbf{M}er neura\textbf{L} op\textbf{E}ra\textbf{T}or -- \PhysGTN, which aims to address the challenges in solving PDEs using neural networks. 
\PhysGTN\ stands out as the first framework of its kind, which employs graph transformers with modular input encoders. Choosing a graph neural network as a foundational architecture enables the capture of complex interactions and dynamics more effectively than standard transformers due to its ability to model intricate spatial relationships and nonlinear interactions efficiently. This capability is especially crucial in handling rapid changes and high-frequency information at initial time steps, as demonstrated by our experimental results. 
\PhysGTN\ has adaptability to irregular meshes, allowing it to solve discretisation-invariant PDEs. Furthermore, \PhysGTN\ showcases exceptional performance even when faced with limited data, it is meticulously designed to address the current challenges in the field, underscoring a significant advancement over existing approaches. Notably, we emphasise:
%

\fcircle[fill=deblue]{2pt} We introduce \textit{the first graph transformer architecture} (\PhysGTN) for PDEs, in which we highlight: \vspace{-0.1cm}
\begin{itemize} [noitemsep,nolistsep]
    \item[--] Our  \PhysGTN\ framework can effectively handle PDEs with arbitrary geometries and diverse input formats, including point non-uniform meshes and design parameters.
    \item[--] \PhysGTN\ provides a strong graph perspective, a feature that significantly boosts the resilience and performance of the model, especially when dealing with limited data availability.    
    \item[--] \PhysGTN\ learns solution operators on a flexible grid, which enhances the computational efficiency of transformers and the flexibility of graphs for problem solving.
\end{itemize}
\fcircle[fill=deblue]{2pt}  We extensively evaluate \PhysGTN\ through experiments on various graphs, which enhances its inference characteristics and reduces overfitting. In addition, we compare our proposed technique with existing approaches and validate it on multiple datasets.

\section{Related Work}\label{sec:related_work}
\fcircle[fill=wine]{2.5pt} \textbf{Learned PDEs}. New methods have been proposed in various fields, such as fluid dynamics for solving PDEs \cite{brunton2020machine, stachenfeld2022learned_simulations}. These advances have led to the extension of architectures that describe continuous-time solutions and multiparticle dynamics \cite{iakovlev_learning_2021, wang2020towards}. With knowledge of the underlying PDEs, physics-informed models can be used to retrieve single-instance solutions in both unsupervised and semi-supervised settings \cite{raissi_physics-informed_2019, zhu2019physics, karniadakis2021physics_informed_machine_learning}.
The standard strategy for solving such tasks is to encode the data spatially and then use various schemes to evolve over time. Deep learning architectures employ models such as convolutional layers \cite{ronneberger2015unet, zhu_bayesian_2018, bhatnagar2019prediction_convolutional}, symbolic neural networks \cite{Kaiser_2018_sindy, long2019pde}, residual networks \cite{he2016deep}, and more advanced physics-informed networks~\cite{jagtap2021extended}, as well as methods known from finite element methods (FEM) such as Galerkin and Ritz~\cite{sirignano2018dgm, e2018deep_ritz}.

\fcircle[fill=wine]{2.5pt} \textbf{Neural Operators}. After the proposal of DeepONet \cite{lu_deeponet_2021} as a general operator learning framework, further research was carried out to learn the underlying solution operator using model reduction techniques~\cite{bhattacharya2021model}. Neural operators, such as graph neural networks~\cite{li2020multipole, li2020neural} or convolutional layers in the Fourier space~\cite{li2020fourier}, are models that describe functions between infinite-dimensional spaces. The Fourier Neural Operator (FNO) and its variants, such as incremental, factorised, and adaptive FNO or FNO+~\cite{zhao2022incremental, tran2021factorized, guibas2022adaptive, li_fourier_2022}, have demonstrated remarkable results in terms of speed and error, with their main advantage being discretisation invariance. 
Most recently, new adaptive neural operator models have been proposed that use transformations~\cite{tripura2022wavelet, gupta2021multiwavelet}, novel message-passing algorithms on the graph~\cite{brandstetter2022message, boussif2022magnet, ong_iae-net_2022}, attention mechanisms~\cite{li_transformer_2022, kissas_learning_2022}, or different function bases~\cite{fanaskov_spectral_2022}. To improve performance, methods with increasing physical inductive bias~\cite{li2021physics, wang2021learning}, data augmentation~\cite{brandstetter2022lie}, and symmetry-related approaches~\cite{brandstetter2022clifford} have been proposed.

\fcircle[fill=wine]{2.5pt} \textbf{Graph Neural Networks in Physical Systems}.
GNNs have demonstrated their advantage in analysing complex dynamical systems by describing interactions on irregular grids \cite{li2022graph_lagrangian_fluids, battaglia_interaction_2016, battaglia_relational_2018, sanchez-gonzalez_graph_2018, alet_graph_2019}. Message-passing algorithms in GNNs have been used to pass messages between small groups of moving or interacting objects \cite{gilmer_neural_nodate}. New research has been conducted on solving differential equations with GNNs, but spatial information about the location of nodes is needed and there is increased computational complexity \cite{li2020neural, li2020multipole}. GNNs have been used to solve complex initial and boundary value problems and to model continuous-time propagation \cite{lotzsch_learning_2022, horie_physics-embedded_nodate, pilva_learning_2022, iakovlev_learning_2021}. All-in-one models have also been proposed by describing PDEs as neural operators \cite{brandstetter2022message, li2020neural}.

\fcircle[fill=wine]{2.5pt} \textbf{Transformers in Physical Systems}. The attention mechanism has been successful in describing dynamics and patterns in physical systems \cite{geneva_transformers_2022, han2022predicting}. Attention layers capture structures and patterns in the spatial domain of PDEs \cite{shao2022sit, cao_choose_nodate}, while temporal evolution is modelled using attention \cite{geneva_transformers_2022, song_m2n_nodate}. Our model uses attention-based layers for spatial encoding and a recurrent multilayer perceptron (MLP) for time marching in latent space. Graph neural networks have also been studied using attention \cite{min_transformer_2022, yun_graph_nodate, dwivedi_generalization_2021}, and architectures are used to model the solution operator for PDEs \cite{bo2023specformer, li_transformer_2022, cao_choose_nodate, kissas_learning_2022}. However, to the best of our knowledge, graph transformers and their capabilities have not yet been explored for PDEs, opening the door to a new research line. Janny et al. \cite{janny2023eagle} introduce a mesh-based transformer (EAGLE) optimising for large distances through pregenerated coarse meshes, combining GNN and self-attention for dynamic data. \PhysGTN\ contrasts with its graph transformer encoder, CrossFormer for parameter integration, and MLP-based time propagation, which favours recurrent over autoregressive model updates. Steeven et al. \cite{steeven2024space}, parallel in aim, differs in topology and integration methods, using a GNN backbone and recurrent updates with continuous integration, unlike \PhysGTN\'s single CrossFormer approach.
\begin{figure*}[t!]
  \centering
  \includegraphics[width=1\textwidth]{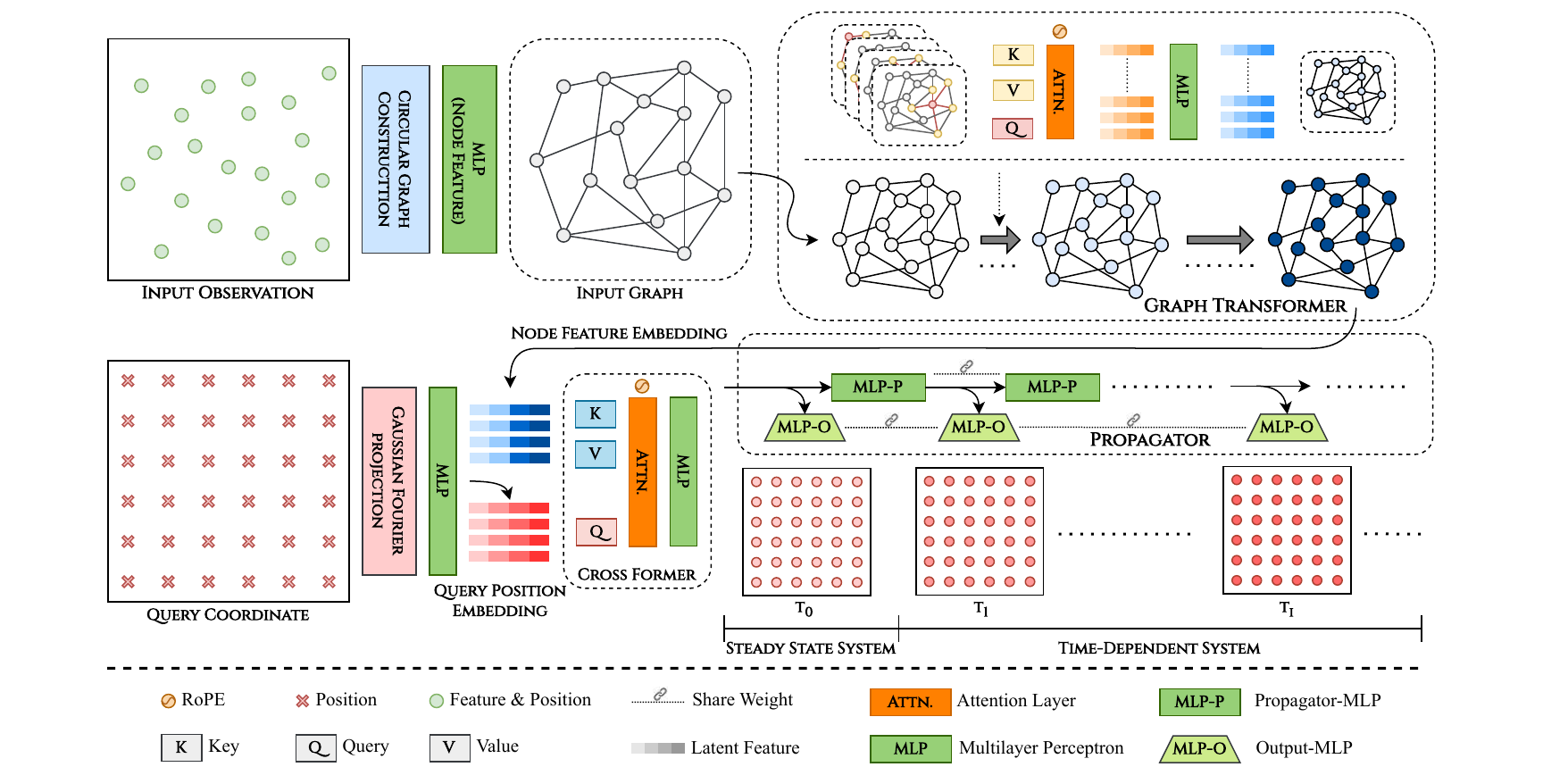}
  \caption{
  Architectural overview of \PhysGTN. 
  The diagram depicts the transformation of input observations into graph representations, followed by node feature embedding using a multilayer perceptron (MLP) to prepare key, value and query vectors for the graph transformer. 
  The transformer captures data structure and node relationships, which are then refined by the cross-former with query positions. Finally, shared MLPs propagate the embeddings to model system behaviour over time.
  }
  \label{fig:FIG_TEASER}
\end{figure*}

\section{Methodology: \PhysGTN}\label{sec:methodology}

\textbf{Problem statement.} 
We consider parametric partial differential equations defined on a domain $\mathcal{D}\subset\mathbb{R}^n$. We denote the domain by $\mathcal{D}_\theta$, where $\theta\in\mathcal{\mathcal{P}}$ and $\mathcal{P}\subset\mathbb{R}^p$ is the parameter space. Thus, the domain $\mathcal{D}_\theta$ is parameterised by $\theta$, which is sampled over the domain according to the distribution $\nu$. The general form of our problems reads:
\begin{equation}
\label{eq:DefinitionPartialDifferentialEquation}
\begin{aligned}
    P&: \mathcal{P}\times\mathcal{D} \times \mathcal{V} \times \mathbb{R}^m \times \ldots \times \mathbb{R}^m \rightarrow \mathbb{R}^{\ell}, \quad \mathcal{D} \subset \mathbb{R}^n, \mathcal{V} \subset \mathbb{R}^m, \\
    P&\left(\theta, x, u, \partial_{x_1} u, \ldots \partial_{x_n} u, \ldots, \partial_{x_1}^{\alpha_1} \cdots \partial_{x_n}^{\alpha_n} u\right)=0.
\end{aligned}
\end{equation}
The fixed map $P$ is solved by a vector-valued function $u: \mathcal{D} \rightarrow \mathcal{V}$, which is not known a priori. Here $\alpha$ is a multiindex defined as $\alpha=\left(\alpha_1, \ldots \alpha_n\right)$, with $\alpha_i \in \mathbb{N}_0,|\alpha|=\sum_{i=1}^n \alpha_i$. In our case, the first entry represents the time; the domain reduces to $\mathcal{T}\subset\mathbb{R}_{\geq 0}$ and $\mathcal{D}_\theta\subset\mathbb{R}^{n-1}$, if the equation depends on additional space variables.
For a well-defined setting, for most of the problems, at least one of the following conditions should be fulfilled:
\begin{subequations}
\begin{align}
    u(x)=u_0(x), & \quad x\in \mathcal{D}_\theta \times\{T_0\}, \label{eq:PDE_IVC}\\
    u(x)=u_b(x), & \quad x\in \partial \mathcal{D}_\theta \times \mathcal{T}, \label{eq:PDE_BVC}
\end{align}
\end{subequations}
where $T_0\in\mathcal{T}$. We call $u_0$ the initial condition and $u_b$ the boundary condition. Assume $\mathcal{P}, \mathcal{D}, \mathcal{V}$ to be Banach spaces with a given norm. We assume that there exists an analytic solution operator: $\mathcal{S}: \mathcal{P}\times\mathcal{D}\times\mathbb{R}^m\times\ldots\times\mathbb{R}^m\times\mathbb{R}^\ell\times\mathbb{R}^\ell\rightarrow\mathcal{V}$, $(\theta, x, \partial_{x_1} u, \ldots \partial_{x_n} u, \ldots, \partial_{x_1}^{\alpha_1} \cdots \partial_{x_n}^{\alpha_n} u, u_b, u_0)\mapsto u$. We then seek to design an architecture to function as an approximate solution operator, denoted by $\tilde{\mathcal{S}}_\mu: (\theta, u_0,u_b)\mapsto u$, where $\mu\in\mathbb{R}^p$ represents the parameters of the neural network. 
We can assume that the dataset $\{\theta^{(n)},u^{(n)} \}^N_{n=1}$ is provided, where $\theta^{(n)} = \{\theta^{(n)}_j\}_{j=1}^L$ is the collection of input parameters describing the system's state on a $L$-discretised domain. We use the notation $\theta^{(n)}=\theta(x^{(n)})$ and $u^{(n)}=u(x^{(n)})$ for brevity. Furthermore, we assume that our operator $\tilde{\mathcal{S}}_\mu(\theta^{(n)})=u^{(n)}$ may be corrupted with noise. We underline that in this work, we consider both type of problems: stationary and time-dependent. Refer to Figure~\ref{fig:FIG_TEASER} for an overview of our \PhysGTN\ model.

\subsection{Graph Construction for \PhysGTN}\label{sec:methodology_graph}

\textbf{Input Format for a Graph Space.} 
The input of \PhysGTN\ is defined on a general graph $\mathcal{G}$ composed of a set of nodes $V$ and edges $E$. Each node in the graph also has predefined node features that are specific to the PDE problem being considered. 
The model can accept various input parameter formats. In many cases, the input is given by a specific mesh design that discretises the spatial domain $\mathcal{D}_\theta$ into $L$ nodes, which we call an \textit{$L$ point discretisation} of the domain $\mathcal{D}_\theta$. 

\textbf{Graph Construction.} 
We define a graph $\mathcal{G}=(V, E)$ with a set of nodes $V=\{h_i\}^L_{i=1}$ and undirected edges $E=\{e_{ij}\}$. The node feature vectors $h_i\in\mathbb{R}^d$ represent the attributes of each node, where $d$ is the dimension of the feature vector. Edge features are properties of connections between nodes, which in our case are used to generate messages in the transformer architecture. 
To define the edges on the graph, we use a 
truncation function, which creates the edges by assigning each node $i$ its neighbourhood $\mathcal{N}_i$.


\textit{How can we generate the edges between the nodes for the connectivity of the graph?}
To generate the edges between the nodes of the graph, we treat the $L$-point discretisation $\{x_i\}^L_{i=1} \subset \mathcal{D}_\theta$ as the nodes and use a truncation function $s$ to map the nodes to subsets of $\mathcal{D}$ denoted $\mathcal{B}(\mathcal{D}_\theta)$. 
We assign and concatenate the parameter value $\theta_i$ and the corresponding position $x_i$ to each node, where the node feature can be defined as $h_i = \theta_i \| x_i$, ($\|$ is concatenation). Connectivity between nodes is defined by the intersection of a node's image under $s$ with the set of all nodes $V$. These subsets of nodes $\mathcal{B}(\mathcal{D}_\theta)$ are called the neighbourhoods of the nodes. In our case, the nodes are connected by an edge if the Euclidean distance between them is less than a predefined radius $r$, and the neighbours are distributed within a circular area. If $s(x)=\mathcal{D}_\theta$, the graph is fully connected. Our choice of using a connectivity ball generator as $s$ leads to a sparse graph, which reduces computational cost.

\subsection{Graph Transformer for \PhysGTN}\label{sec:methodology_graph_transformer}

\textbf{Graph Transformer Block.} 
Graph transformers~\cite{dwivedi_generalization_2021} are neural networks that process graph-structured data, allowing them to learn robust node representations and representations of the entire graph. 
The formulation of the graph-based multi-head self-attention mechanism enables efficient message passing on a graph.
\begin{equation}
\label{eq:GraphAttention}
\begin{aligned}
    \hat{h}_i^{\ell} &= O_h^{\ell} \|_{k=1}^H\left(\frac{1}{|\mathcal{N}_i|}\sum_{j \in \mathcal{N}_i} w_{i j}^{k, \ell} V^{k, \ell} h_j^{\ell}\right), \\ 
    \text { where, } w_{i j}^{k, \ell} &= \operatorname{softmax}_j\left(\frac{Q^{k, \ell} h_i^{\ell} \cdot K^{k, \ell} h_j^{\ell}}{\sqrt{d_k}}\right),
\end{aligned}
\end{equation}
and $Q^{k, \ell}, K^{k, \ell}, V^{k, \ell}\in\mathbb{R}^{d_k \times d}$, $O_h^{\ell}\in\mathbb{R}^{d \times d}$ are fully-connected layers at $k^\text{th}$ attention head in the $l^\text{th}$ transformer block. $\|$ denotes concatenation. 
We integrate position information using Rotary Position Embedding (RoPE)~\cite{su2021roformer}, even though the node features contain positional information unlike vector-based multi-head self-attention. For brevity, the RoPE operator is included into $Q^{k, \ell}:= R_{\Theta} \bar{Q}^{ k, \ell}$ and $K^{k, \ell}:= R_{\Theta} \bar{K}^{ k, \ell}$.

In the message-passing algorithm framework, the attention layer acts as a message that aggregates data from neighbouring nodes. Normalisation is crucial before summing to ensure alignment with neural operator classes. The feedforward neural network takes node features from the latent vector after the multi-head self-attention layer: 
\begin{equation}
\label{eq:GraphTransformerBlock}
\begin{aligned}
    &\hat{h}_i^{\ell} = \operatorname{Attn}(h_i^{\ell}), \
    &\hat{\hat{h}}_i^{\ell} = \operatorname{Norm}(h_i^{\ell}+\hat{h}_i^{\ell}), \\
    &\hat{\hat{\hat{h}}}_i^{\ell} = W_2^{\ell}\cdot \operatorname{ReLU}(W_1^{\ell}  \hat{\hat{h}}_i^{\ell}), \
    &h_i^{\ell+1} = \operatorname{Norm}(\hat{\hat{h}}_i^{\ell}+\hat{\hat{\hat{h}}}_i^{\ell}), 
\end{aligned}
\end{equation}
where $W_1^{\ell}, \in \mathbb{R}^{2 d \times d}$ and $W_2^{\ell}, \in \mathbb{R}^{d \times 2 d}$ are fully-connected layers. $\hat{h}_i^{\ell}$, $\hat{\hat{h}}_i^{\ell}$ and $\hat{\hat{\hat{h}}}_i^{\ell}$ denote intermediate representations. $\operatorname{Norm}$ can be either layer normalisation \cite{ba2016layer} or batch normalisation \cite{ioffe2015batch}.
The attention layer acts as the message $m_i$. 

\textbf{\PhysGTN\ corresponds to the class of Neural Operators.} The operator definition (see Appendix~\ref{Appendix_Discretisation}) captures the local and global behaviour of latent phase features, simulating the behaviour of a classical analytical operator. The kernel integral operator $\mathcal{K}$ is formally written as an integral over the domain $\mathcal{D}_\theta$, utilising a learnable kernel $\kappa^{(l)} \in C(\mathcal{D}_\theta \times \mathcal{D}_\theta; \mathbb{R}^{d_{l+1} \times d_l})$ and a Borel measure $\gamma$ on $\mathcal{D}_\theta$. Specifically, $\mathcal{K}$ is defined as $\left(\mathcal{K} v_l\right)(x)=\int_{\mathcal{D}_\theta} \kappa^{(l)}(x, y) v_l(y) d \gamma(y)$ for all $x \in \mathcal{D}_\theta$. On the graph, we are dealing with a discretised version of $\mathcal{K}$, where the kernel is based on the message-passing algorithm framework \cite{gilmer_neural_nodate}, employing average aggregation to update node features $h_j$ and obtain the operator solution $u\left(x_j\right)$, which reads:
\begin{equation}
\label{eq:DiscretisedKernelDefinition}
\begin{aligned}
    u\left(x_j\right)=\frac{1}{\left|\mathcal{N}_j\right|} \sum_{y \in \mathcal{N}_j} \kappa\left(x_j, y\right) v(y), \quad j=1, \ldots, J.
\end{aligned}
\end{equation}
This representation corresponds to the Monte-Carlo approximation of the integral. The multi-head attention layer in the proposed transformer resembles the integration kernel defined in~\eqref{eq:DiscretisedKernelDefinition} on the graph. We next demonstrate that the neural operator framework is a continuum generalisation of the graph transformer architecture.

\begin{proposition}
\label{main_proposition}
The residual block of the graph transformer layer, as proposed above, can be seen as a special case of the integration kernel of the neural operator.
\end{proposition}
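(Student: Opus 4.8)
The plan is to identify each structural component of the graph transformer layer in~\eqref{eq:GraphAttention} with the corresponding piece of the discretised kernel integral operator in~\eqref{eq:DiscretisedKernelDefinition}, and then to exhibit the layer as the Monte--Carlo discretisation of the continuous kernel operator $\mathcal{K}$.

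First I would reduce to a single attention head ($H=1$) and set aside the output projection $O_h^{\ell}$ for the moment, so that the layer reads $\hat{h}_i^{\ell} = \frac{1}{|\mathcal{N}_i|}\sum_{j \in \mathcal{N}_i} w_{ij}^{\ell}\, V^{\ell} h_j^{\ell}$. I would then propose the candidate kernel $\kappa^{(\ell)}(x_i, x_j) := w_{ij}^{\ell}\, V^{\ell}$, taking the projected node features $v(x_j) := h_j^{\ell}$ as the function being integrated. Because each node feature embeds the spatial coordinate ($h_i = \theta_i \| x_i$) and RoPE injects positional dependence into $Q^{\ell}$ and $K^{\ell}$, the attention score $Q^{\ell}h_i \cdot K^{\ell} h_j / \sqrt{d_k}$ is a continuous function of the pair $(x_i, x_j)$; composing with the smooth softmax and the linear map $V^{\ell}$ shows $\kappa^{(\ell)} \in C(\mathcal{D}_\theta \times \mathcal{D}_\theta; \mathbb{R}^{d_{\ell+1}\times d_{\ell}})$, which matches the regularity required of the neural-operator kernel. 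With these identifications, the single-head layer is exactly of the form~\eqref{eq:DiscretisedKernelDefinition}.

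Second, I would interpret the averaging factor $1/|\mathcal{N}_i|$ as the empirical Borel measure $\gamma$ that places uniform mass on the sampled neighbourhood points, so that $\frac{1}{|\mathcal{N}_i|}\sum_{j\in\mathcal{N}_i} \kappa^{(\ell)}(x_i,x_j) v(x_j)$ is precisely the Monte--Carlo estimator of $\int_{\mathcal{D}_\theta} \kappa^{(\ell)}(x_i,y)v(y)\,d\gamma(y)$. Invoking the standard convergence of Monte--Carlo quadrature, as the $L$-point discretisation refines and $\mathcal{N}_i$ samples the ball $s(x_i)$, the sum converges to $(\mathcal{K}v_{\ell})(x_i)$, establishing that the graph transformer layer is a discretised realisation of the continuous kernel operator. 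Finally I would restore the multi-head concatenation and the output map $O_h^{\ell}$: the concatenation of $H$ head outputs followed by the linear projection $O_h^{\ell}$ is an affine, pointwise-in-$x$ operation, which coincides with the local linear term that accompanies $\mathcal{K}$ in a neural-operator layer; alternatively one folds the heads into a single block-structured kernel $\bigl(\kappa^{(\ell)}_1,\dots,\kappa^{(\ell)}_H\bigr)$ and absorbs $O_h^{\ell}$ into $\kappa^{(\ell)}$, keeping the representation within the neural-operator class.

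I expect the main obstacle to be the softmax normalisation: since $\operatorname{softmax}_j$ divides by $\sum_{j'\in\mathcal{N}_i}\exp(\cdot)$, the weight $w_{ij}^{\ell}$ depends on the entire neighbourhood rather than on the pair $(x_i,x_j)$ alone, so $\kappa^{(\ell)}$ is not literally a two-point kernel. The cleanest remedy is to absorb the normalising denominator into the measure $\gamma=\gamma_i$ as a data-dependent reweighting of the uniform measure, or to work with the un-normalised exponential kernel and treat the partition function as part of the Monte--Carlo weighting; checking that this reweighted measure still yields a well-defined limiting integral operator is the delicate step I would treat with care.
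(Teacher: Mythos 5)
Your proposal follows essentially the same route as the paper's proof: both rewrite the multi-head concatenation as a block-diagonal (direct-sum) kernel, absorb $O_h^{\ell}$ into the value matrices, and identify the attention-weighted neighbourhood average $\frac{1}{|\mathcal{N}_i|}\sum_{j\in\mathcal{N}_i}\kappa(h_i,h_j)h_j$ as a Monte--Carlo approximation of the kernel integral operator. The one point where you are actually more careful than the paper is the softmax normalisation: the paper's proof simply defines $\kappa$ to include the normalised weights $w_{ij}$ (whose partition function ranges over an entire neighbourhood), leaving the dependence of the ``kernel'' on more than the pair $(x_i,x_j)$ unaddressed, so your suggestion of folding the partition function into a data-dependent measure is a legitimate refinement rather than a deviation.
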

\begin{proof}
Let us rewrite the concatenation as a full vector considering that different heads act on different parameters. This yields the following.
\begin{equation}
\label{eq:Proposition1}
    \hat{h}_i^{\ell}=\frac{O_h^{\ell}}{\left|\mathcal{N}_i\right|}\sum_{j\in\mathcal{N}_i}
    \begin{bmatrix}
    w_{i j}^{1, \ell} V^{1, \ell} & 0 & 0 \\
    0 & \ddots & 0 \\
    0 & 0 & w_{i j}^{H, \ell} V^{H, \ell} 
    \end{bmatrix}\cdot \begin{bmatrix}
    h_j^{\ell} \\
    \vdots \\
    h_j^{\ell} 
\end{bmatrix}
\end{equation}
Considering that the components are distinct entries in the diagonal matrix, we can rewrite this as:
$\hat{h}_i^{\ell}=\frac{O_h^{\ell}}{\left|\mathcal{N}_i\right|} \sum_{j \in \mathcal{N}_i}\left(w_{i j}^{1, \ell} V^{1, \ell}\oplus\cdots\oplus w_{i j}^{H, \ell} V^{H, \ell}\right)h^\ell_j$.
We then absorb $O_h^\ell$ into a block diagonal matrix to obtain:
$\hat{h}_i^{\ell}=\frac{1}{\left|\mathcal{N}_i\right|}\sum_{j \in \mathcal{N}_i}\left(w_{i j}^{1, \ell} \tilde{V}^{1, \ell}\oplus\cdots\oplus w_{i j}^{H, \ell} \tilde{V}^{H, \ell}\right)h^\ell_j$.
We use $\mathrm{ReLU}$ as the activation function $\sigma$. Denote $f^{n,l,i}_{j}:= \exp\left(\frac{Q^{n, \ell} h_i^{\ell} \cdot K^{n, \ell} h_j^{\ell}}{\sqrt{d_n}}\right)$, then we can write:
%
%
\begin{equation}
\label{eq:Proposition3}
\begin{aligned}
    h_i^{\ell+1} = \sigma\Bigg( W_1^{\ell}h_i^{\ell} &+ \frac{1}{|\mathcal{N}_i|}\sum_{j \in \mathcal{N}_i} \Bigg( 
    \frac{\exp\left(\frac{Q^{1, \ell} h_i^{\ell} \cdot K^{1, \ell} h_j^{\ell}}{\sqrt{d_1}}\right)}{\sum_{k\in\mathcal{N}_j} f^{1,l,k}_{j}} \tilde{\tilde{V}}^{1, \ell} \\
    &\oplus \cdots \oplus \frac{\exp\left(\frac{Q^{H, \ell} h_i^{\ell} \cdot K^{H, \ell} h_j^{\ell}}{\sqrt{d_H}}\right)}{\sum_{k\in\mathcal{N}_j}f^{H,l,k}_{j}} \tilde{\tilde{V}}^{H, \ell} \Bigg) h_j^{\ell} \Bigg) \\
    = \sigma\Bigg( W_1^{\ell}h_i^{\ell} &+ \frac{1}{|\mathcal{N}_i|}\sum_{j \in \mathcal{N}_i} \kappa h_j^{\ell} \Bigg) \\
\end{aligned}
\end{equation}
Defining $\kappa$ as the kernel and using the node features $h_j$ and $h_i$ as inputs, we obtain: 
$\mathcal{K} =\frac{1}{|\mathcal{N}_i|}\sum_{j\in\mathcal{N}_i}\kappa \left(h_i,h_j\right)h_j$, 
which can be considered as the Monte-Carlo approximation of the integral $u(x)=\int_{s(x)} \kappa(x, y) v(y) \mathrm{d} y, \quad \forall x \in\mathcal{D}_\theta$. Thus, the graph transformer is also applied to the class of neural operators.
\end{proof}

\subsection{Unboxing the \PhysGTN\ Architecture}\label{sec:methodology_arch}


\PhysGTN\ features three key modules: input parameter encoder (EncI), query and information fusion encoder (EncQ), and a decoder (Dec) for steady-state or time-dependent system output.

\faHandPointRight[regular] \textbf{\PhysGTN\ Encoder.}
As mentioned in Sec.~\ref{sec:methodology_graph}, the input graph $\mathcal{G}_\text{in}$ is constructed with input parameters and the corresponding positions as node features, using circular truncation for edge connectivity. $\mathcal{D}_\text{in}$ denote the spatial grid of the input parameters.
In EncI, the node features $\{h_i\}^L_{i=1}$ are embedded into latent space $\{\tilde{h_i}\}^L_{i=1}$ through a fully connected layer $\operatorname{FC}_\text{in}$, and the graph is updated as $\mathcal{\tilde{G}}_\text{in}$.
The data tuple $\{\mathcal{\tilde{G}}_\text{in}, \mathcal{D}_\text{in}\}$ is then fed into a stack of graph transformer blocks $\{\operatorname{GT}_{k}\}^{N_{\text{GT}}}_{k=1}$ discussed in Sec.~\ref{sec:methodology_graph_transformer}, which can be expressed as follows:
\begin{equation}
\begin{aligned}
    \mathcal{\tilde{G}}_\text{in} &= \operatorname{FC}_\text{in}(\mathcal{G}_\text{in}), \label{eq:encoder_i_1}\\
    \mathcal{G}_k &= \operatorname{GT}_{k}(\mathcal{G}_{k-1}, \mathcal{D}_\text{in}), \quad k = 1, \ldots, N_{\text{GT}}, 
\end{aligned}
\end{equation}
where $\mathcal{G}_0 = \mathcal{\tilde{G}}_\text{in}$. $\mathcal{G}_\text{EncI} = \mathcal{G}_{N_\text{GT}}$ is the output of EncI, with the node features $\{h_{\text{EncI},i}\}^{L}_{i=0}$, denoted as $\mathcal{H}_\text{EncI}$ for brevity.

The aim of EncQ is to encode the query location and integrate the query location embedding and the input parameter features.
Specifically, the query spatial grid $\mathcal{D}_\text{qry}$ ($L^{\prime}$-point discretised) is embedded by a Gaussian Fourier projection $\operatorname{GF}_\text{qry}$~\cite{tancik2020fourier} and then fed to a $\operatorname{MLP}_\text{qry}$ for the embedding of the query position $\{h_{\text{EncQ},i^\prime}\}^{L^\prime}_{i^\prime=0}$, denoted as $\mathcal{H}_\text{EncQ}$ for brevity. 
Subsequently, a cross-attention-based transformer block, namely, CrossFormer~\cite{li_transformer_2022}, is applied to integrate the $\mathcal{H}_\text{EncI}$ and $\mathcal{H}_\text{EncQ}$, which can be expressed:
\begin{subequations}
\begin{align}
    \mathcal{H}_\text{EncQ} &= \operatorname{MLP}_\text{qry}(\operatorname{GF}_\text{qry}(\mathcal{D}_\text{qry})), \label{eq:encoder_q_1}\\
    \mathcal{H}_\text{Enc} &= \operatorname{CF}(\mathcal{H}_\text{EncI}, \mathcal{H}_\text{EncQ}), \label{eq:encoder_q_2}
\end{align}
\end{subequations}
where $\operatorname{CF}$ denotes the CrossFormer and $\mathcal{H}_\text{Enc}$ are integrated latent features for the subsequent decoder.

Similarly to graph transformer blocks, the CrossFormer adopts a standard ``Attn-Norm-MLP-Norm'' with residual connection, where ``Attn'' is Galerkin-type cross-attention~\cite{cao2021choose, li_transformer_2022}, allowing for arbitrary query position. 
By introducing three sets of basis functions $\{k_l(\cdot), v_l(\cdot), q_l(\cdot)\}_{l=1}^d$, the cross-attention $\operatorname{CAttn}$ reads:
\begin{equation}
\begin{aligned}\label{eq:crossformer}
    \operatorname{CAttn}_j({h}_{\text{EncQ},i^\prime}) &= \sum_{l=1}^{d} \Bigg( \frac{\sum_{i=1}^{L} k_l({h}_{\text{EncI},i}) v_j({h}_{\text{EncI},i})}{L} \Bigg) \\
    &\quad \times q_l({h}_{\text{EncQ},i^\prime}).
\end{aligned}
\end{equation}

\faHandPointRight[regular] \textbf{\PhysGTN\ Decoder.} 
As discussed above, encoders lift the representation into a higher dimensional space, increasing the degrees of freedom and allowing for the interpretation of input parameters and query positions. The main task of the decoder is to project the latent feature back into the observable space.
For a steady-state system, such as Darcy Flow, an $\operatorname{MLP}_\text{out}$ is utilised to project the latent feature to observable space
\begin{equation}
\begin{aligned}\label{eq:decoder_sss}
    \hat{u} = \operatorname{MLP}_\text{out}(\mathcal{H}_\text{Enc} || \mathcal{D}_\text{qry}).
\end{aligned}
\end{equation}

However, for a time-dependent system, directly tracking all problem parameters and learning the whole temporal sequence raise complexity and lead to limited performance, especially problems with long temporal sequences, e.g. Diffusion Reaction. Consequently, inspired by~\cite{li_transformer_2022}, a recurrent-style propagator is adopted for time-dependent system decoding. The decision to utilise an RNN on top of the encoder features was driven by the requirement to capture temporal dependencies and propagate information across time steps, which is critical for the accurate simulation of PDEs over time. The latent state's role as input to the RNN over time is to maintain a continuous and coherent evolution of the system state, ensuring that predictions at each time step are informed by the accumulated knowledge of prior states. The encoder output is used as an initial state $\mathcal{H}_\text{Dec}^{0} = \mathcal{H}_\text{Enc}$, and fed into the propagator-MLP $\operatorname{MLP}_\text{prop}$ (with a residual connection) for recurrent running. After reaching a specific recurrent step, output-MLP $\operatorname{MLP}_\text{out}$ is used to project the latent feature to an observable space. 
The propagation in latent space from state $t$ to $t+\triangle t$, and the corresponding output projection, can be expressed as follows:
\begin{subequations}
\begin{align}
    \mathcal{H}_\text{Dec}^{t+\triangle t} &= \operatorname{MLP}_\text{prop}(\mathcal{H}_\text{Dec}^{t} || \mathcal{D}_\text{qry}) + \mathcal{H}_\text{Dec}^{t}, \label{eq:decoder_tds_1} \\
    \hat{u}^{t+\triangle t} &= \operatorname{MLP}_\text{out}(\mathcal{H}_\text{Dec}^{t+\triangle t} || \mathcal{D}_\text{qry}). \label{eq:decoder_tds_2}
\end{align}
\end{subequations}
The outputs of $\operatorname{MLP}_\text{out}$ are collected during the recurrent running $\hat{u}|_{t \in[\triangle t, T]} = [\hat{u}^{\triangle t}, \hat{u}^{2\triangle t}, \ldots,\hat{u}^{T}]$.

\faHandPointRight[regular]  \textbf{\PhysGTN\ Loss Function \& Discretisation Invariance.} 
We aim to learn the mapping between two infinite-dimensional spaces using a finite collection of dataset pairs. Our objective is to optimise the parameters $\mu = \{\mu_{\text{EncI}}, \mu_{\text{EncQ}}, \mu_{\text{Dec}} \}$ to obtain the best possible approximation $\tilde{\mathcal{S}}_{\mu_{opt}}\sim \mathcal{S}$, which represents a specific configuration of the operator map. We can solve this problem by minimising the empirical risk, considering an entire sampling set of $N$ independent and identically distributed (i.i.d.) datasets, which is given by:
\begin{equation}
\begin{aligned}\label{eq:loss}
\min _{\mu \in \mathbb{R}^p}\mathcal{L}_{\text {data }}\left(u, \mathcal{S}_\mu(\theta)\right) 
&= \min _{\mu \in \mathbb{R}^p}\mathbb{E}_{\theta \sim \nu}
\mathcal{L}(u, \mathcal{S}_\mu(\theta)) \\
&\approx \min _{\mu \in \mathbb{R}^p} \frac{1}{N} \sum_{i=1}^N
\mathcal{L}(u, \mathcal{S}_\mu(\theta)),
\end{aligned}
\end{equation}
where $\mathcal{L}_{\text{data}}$ is the empirical loss of distance function $\mathcal{L}$, which is either the MSE loss $\mathcal{L} = (u - \mathcal{S}_\mu(\theta))^2$ or the relative $L_2$ Norm Loss $\mathcal{L} = \frac{\|u - \mathcal{S}_\mu(\theta)\|_2}{\|u\|_2}$. The selection of loss function is based on the benchmark protocol; refer to Appendix~\ref{Appendix_Impl}.

The data pairs $\{\theta^{(n)},u^{(n)}\}_{n=1}^N$ are the discretisations of our continuous space. We assume that we only have access to the point-wise evaluation of finite points. Let $\mathcal{D}^{(n)}=\{x_{i}^{(n)}\}_{i=1}^{L_n} \subset \mathcal{D}$ be a $L_n$-point discretisation of the domain $\mathcal{D}$. Despite being trained on a $L_n$-point discretisation, the operator $\tilde{S}_\mu$ should be able to provide $u(x)$ for any arbitrary $x \in \mathcal{D}$ given an input $\theta \in \mathcal{P}$. In particular, the discretised architecture maps into the space $\mathcal{V}$ and not into a discretisation thereof. For additional details, see Appendix~\ref{Appendix_Discretisation}.

\section{Experimental Results}
In this section, we detail all the experiments carried out to validate our proposed \PhysGTN\ technique.

\subsection{Dataset \& Implementation Details}
\fcircle[fill=ocra]{2.5pt} \textbf{Dataset and PDEs.} We mainly utilised the datasets from PDEBench~\cite{takamoto2022pdebench} -- a wider range public benchmark for PDE-based simulation tasks, selecting Darcy Flow, Shallow Water, and Diffusion Reaction showcasing the stationary and time-dependent problems on a uniform grid. We also utilised the Airfoil dataset from~\cite{pfaff2021learning}, which models aerodynamics around an airfoil wing's cross-section, for experiments on irregular grids. Refer to Appendix~\ref{Appendix_Data} for details on the dataset and setting used.


\textbf{Darcy Flow.} We conducted experiments on the steady-state solution of 2D Darcy Flow over the uniform square. The setting used is detailed in Appendix~\ref{Appendix_Data}. \newline
%
%
%
\textbf{Shallow Water.} 
We performed the experiments on the time-dependent system of 2D Shallow Water equations, providing a 
model for analysing problems related to free-surface flows. Refer to Appendix~\ref{Appendix_Data} for details on our setting. \newline
%
%
%
%
%
%
\textbf{Diffusion Reaction.} 
The time-dependent system of 2D Diffusion-Reaction problems were included in the experiment. Our setting is detailed in Appendix~\ref{Appendix_Data}.
%
%
%
%
\newline
\textbf{Airfoil.} 
For non-uniform grid, we incorporated the airfoil problem based on 2D time-dependent compressible flow utilising datasets from~\cite{pfaff2021learning}. Refer to Appendix~\ref{Appendix_Data} for details on our setting. 

\fcircle[fill=ocra]{2.5pt} \textbf{Evaluation Protocol \& Implementation Details.}

We performed comparative experiments between our technique and several leading methods, including U-Net~\cite{ronneberger2015unet}\footnote{Using the PDEBench implementation \cite{takamoto2022pdebench}}, FNO~\cite{li2020fourier}, PINN~\cite{raissi_physics-informed_2019}, DeepONet~\cite{lu_deeponet_2021}, Geo-FNO~\cite{li2022fourier}, MAgNet~\cite{boussif2022magnet}, GNOT~\cite{hao2023gnot}, EAGLE~\cite{janny2023eagle} and OFormer~\cite{li_transformer_2022}.
For Darcy Flow, Shallow Water and Diffusion Reaction datasets, we report that the normalised root mean squared error (nRMSE) provides scale-independent information, following the PDEBench protocol~\cite{takamoto2022pdebench}. In addition, we report the results of RMSE in the Appendix~\ref{Appendix_Result}.
For Airfoil, we report the results using the relative $L_2$ error following the settings from~\cite{li_transformer_2022}.
All experiments were performed on a single NVIDIA A100 GPU with 80GB of memory, running under the same conditions for a fair comparison. For dynamical PDE settings, we trained \PhysGTN\ directly by unrolling for 90 timestamps. In contrast, UNet, FNO, GeoFNO, and MAgNet were trained using an auto-regressive approach. DeepONet was trained directly for the full timestamps, while OFormer was also trained by unrolling for the full timestamps.
More details about the model architecture, training setting, and hyper-parameter can be found in Appendix~\ref{Appendix_Impl}.

%

\begin{table*}[t!]
  \centering
  \caption{Numerical comparison of our approaches vs. existing techniques. The values reflect the nRMSE. The best-performing results are highlighted in \colorbox[HTML]{BBFFBB}{green}.}
  \resizebox{\textwidth}{!}{
    \begin{tabular}{ccccccccccc}
    \toprule
    \multicolumn{2}{c|}{\textsc{Dataset Setting}} & \multicolumn{9}{c}{Normalised RMSE (nRMSE)} \\
    \midrule
    Dataset & \multicolumn{1}{c|}{Param.} & GNOT & U-Net & FNO   & DeepONet & OFormer & GeoFNO & MAgNet & & \PhysGTN \\ \cmidrule{1-9}\cmidrule{11-11}
    Darcy Flow & \multicolumn{1}{c|}{$\beta = 0.01$} & -- & 1.10E+00 & 2.50E+00 & 3.25E-01 & \cellcolor[HTML]{BBFFBB}3.04E-01 & 1.03E+00 & 7.71E-02&  & \textbf{3.11E-01} \\
    Darcy Flow & \multicolumn{1}{c|}{$\beta = 0.1$} & -- & 1.80E-01 & 2.20E-01 & 1.67E-01 & 1.15E-01 & 3.13E-01 & 8.10E-02 & & \cellcolor[HTML]{BBFFBB}\textbf{7.65E-02} \\
    Darcy Flow & \multicolumn{1}{c|}{$\beta = 1.0$}& -- & 3.30E-02 & 6.40E-02 & 5.12E-02 & 2.05E-02 & 6.34E-02 & 1.03E-01 &  &\cellcolor[HTML]{BBFFBB}\textbf{1.40E-02} \\
    Darcy Flow & \multicolumn{1}{c|}{$\beta = 10.0$} & -- & 8.20E-03 & 1.20E-02 & 3.97E-02 & 6.34E-03 & 2.51E-02 & 1.62E-01 & &\cellcolor[HTML]{BBFFBB}\textbf{4.77E-03} \\
    Darcy Flow & \multicolumn{1}{c|}{$\beta = 100.0$} & -- & 4.40E-03 & 6.40E-03 & 3.64E-02 & 4.19E-03 & 2.04E-02 & 1.95E-01 & &\cellcolor[HTML]{BBFFBB}\textbf{3.46E-03} \\ \cmidrule{1-9}\cmidrule{11-11}
    Shallow Water & \multicolumn{1}{c|}{--}   & 4.16E-03 & 8.30E-02 & 4.40E-03 & 2.35E-03 & 2.90E-03 & 6.70E-03 & -- &   & \cellcolor[HTML]{BBFFBB}\textbf{2.04E-03} \\
    Diffusion Reaction & \multicolumn{1}{c|}{--}  & 8.22E-01 & 8.40E-01 & 1.20E-01 & 8.42E-01 & 3.28E+00 & 7.72E+00 & --   &  & \cellcolor[HTML]{BBFFBB}\textbf{9.02E-02} \\
    \bottomrule
    \end{tabular}
    }
  \label{tab:main_nrmse}%
\end{table*}%

\begin{figure}[t!]
  \centering
  \centerline{\includegraphics[width=0.5\textwidth]{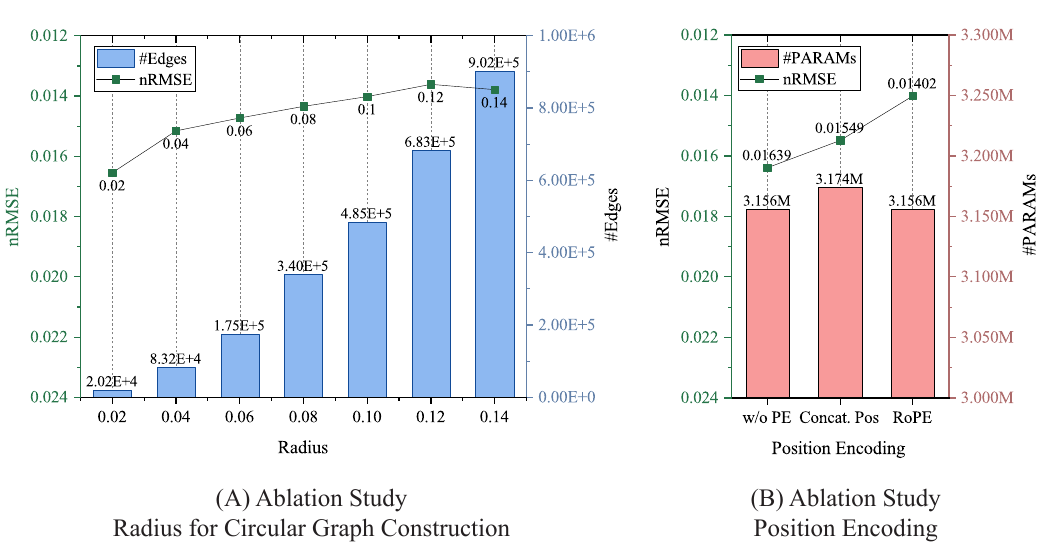}}
  \caption{
  Performance impacts of graph construction and position encoding on model accuracy. 
  (A) The trade-off between circular graph radius and nRMSE alongside edge count. 
  (B) Comparison of nRMSE across position encoding methods.}
  \label{fig:FIG_PLOT_ABL}
\end{figure}
\begin{figure}[t!]
  \centering
  \centerline{\includegraphics[width=0.5\textwidth]{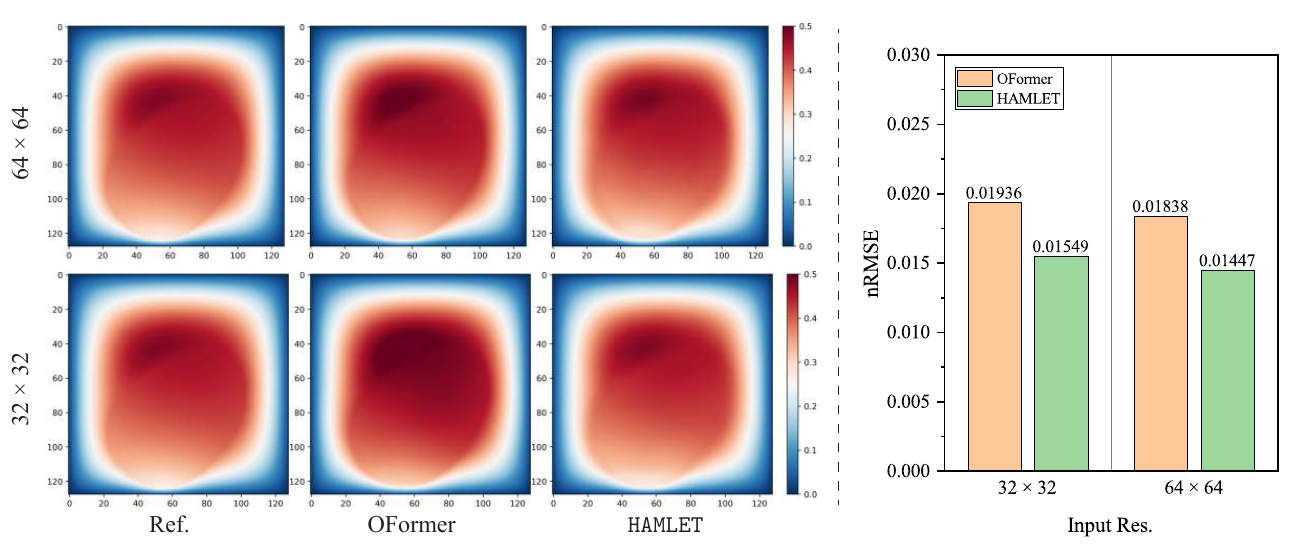}}
  \caption{
  Comparison of model predictions ($128\times128$) against a reference for two input resolutions ($32\times32$ and $64\times64$). 
  Left: Heatmap comparisons of the reference and predicted outcomes by OFormer and \PhysGTN\ models. 
  Right: Corresponding nRMSE values, demonstrating the superior accuracy of \PhysGTN.
  } \vspace{-0.2cm}
  \label{fig:FIG_PLOT_LOC}
\end{figure}
\begin{figure}[t!]
  \centering
  \centerline{\includegraphics[width=0.5\textwidth]{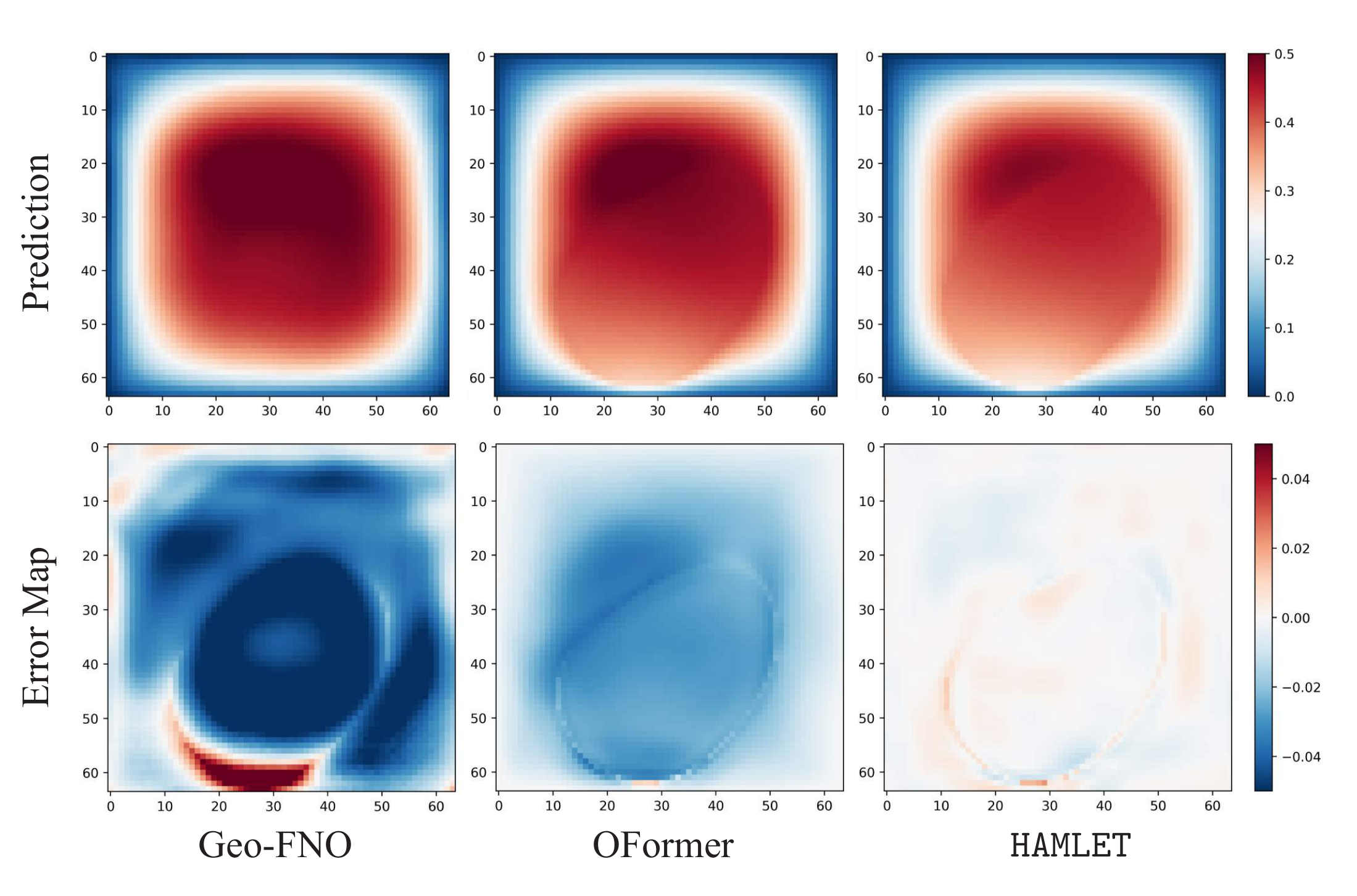}}
  \caption{Predictions and corresponding error maps for Geo-FNO, OFormer, and \PhysGTN\ models using Darcy Flow ($\beta=1.0$).} \vspace{-0.3cm}
  \label{fig:FIG_VIS_DarcyFlow_V2}
\end{figure}


\begin{figure*}[t!]
  \centering
  \centerline{\includegraphics[width=1\textwidth]{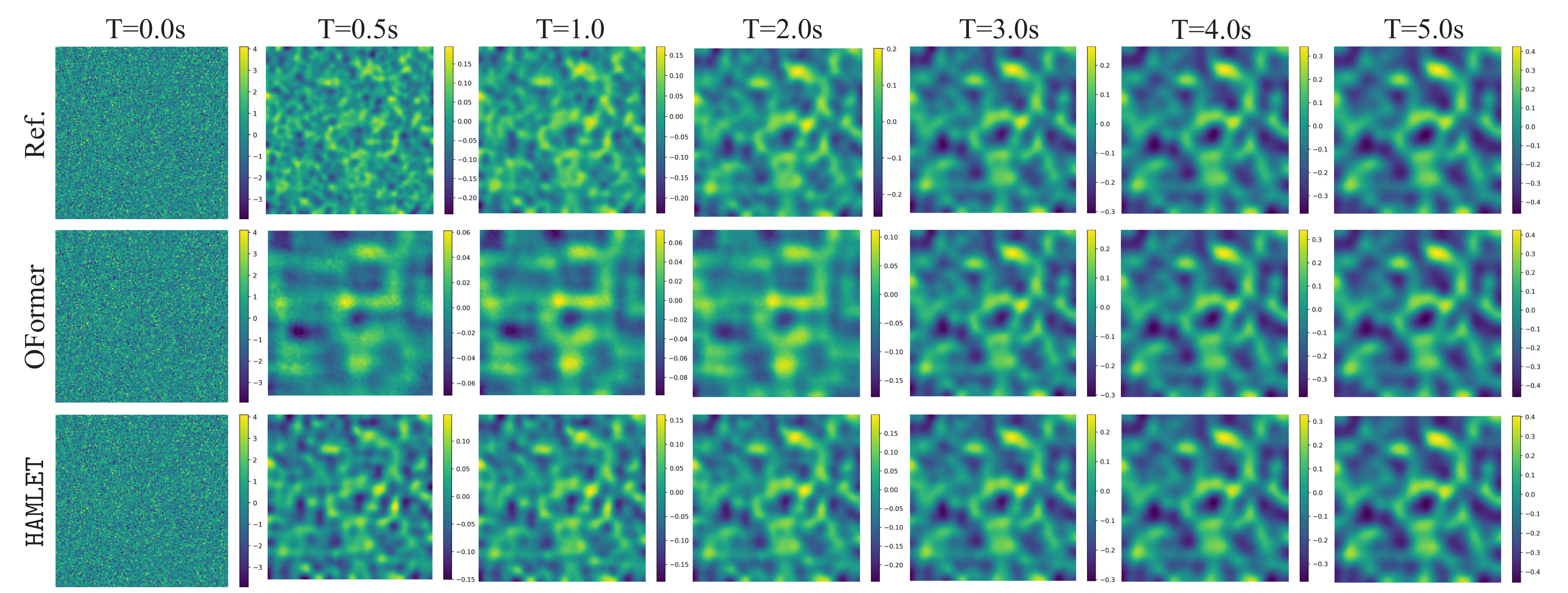}}
  \caption{
  Time evolution (activator) of the diffusion-reaction process as predicted by OFormer and \PhysGTN\ compared to the reference solution. 
  \PhysGTN 's predictions closely match the reference, showcasing its efficacy in dynamic PDE modelling.
  }
  \label{fig:FIG_VIS_DR2D_U}
\end{figure*}


\subsection{Results \& Discussion} 

\textbf{Numerical Comparison.} We carried out a thorough comparison of our 
framework against leading deep learning techniques for solving PDEs. The results are summarised in Table~\ref{tab:main_nrmse}\footnote{Note, that MAgNet was also trained on Shallow Water and Diffusion Reaction PDEs but the training failed to converge. Consequently, the results for MAgNet are not reported in Table~\ref{tab:main_nrmse}}.
Further supporting results are available in Appendix~\ref{Appendix_Result}. 
Our evaluation of the \PhysGTN\ approach in various datasets and parameter settings has demonstrated its robust performance. Notably, within the Darcy Flow dataset, \PhysGTN\ exhibited a commendable adaptability to changing parameters settings of $\beta$. The decrease in nRMSE as $\beta$ increased from 0.01 to 100.0 highlights \PhysGTN's capability to scale effectively with increased complexity or noise within the data. This trend suggests that \PhysGTN\ is not only robust, but also potentially more adept at handling complex simulations where other models may falter.

The generalisability of \PhysGTN\ was further underscored by its performance on the Shallow Water and Diffusion Reaction datasets. Its ability to maintain low nRMSE values across these diverse datasets suggests that \PhysGTN\ is not just tailored to a single type of physical simulation, but can be applied across various domains. This adaptability is critical for real-world applications, where conditions can be unpredictable and models must maintain a high degree of accuracy. Furthermore, there is a clear trend that as $\beta$ increases, the nRMSE decreases for \PhysGTN, suggesting that it scales well with complexity or noise within the data.

In addition, we conducted an in-depth comparison of a dynamic PDE modelling as predicted by OFormer and \PhysGTN\ against the reference solution. The results, illustrated in Figure \ref{fig:FIG_VIS_DR2D_U}, underscore the accuracy of \PhysGTN\'s predictions, which closely align with the reference, highlighting its effectiveness in dynamic PDE modelling. Our additional investigations reinforce \PhysGTN\'s state-of-the-art performance. Additional numerical experiments are also available under Appendix~\ref{Appendix_Result}.\\ \\
\textbf{Does the Graph Radius Affect Performance?} In Figure~\ref{fig:FIG_PLOT_ABL}-(A), the ablation study, using Darcy Flow, on the radius for the construction of the circular graph reveals a clear inverse relationship between the radius and the nRMSE. As the graph radius increases, there is an improvement in the nRMSE, indicating enhanced model performance. This trend continues until the radius reaches approximately 0.1 to 0.14, at which point the performance gains begin to plateau. This plateau suggests a convergence in performance improvement despite the graph becoming more complex with a greater number of edges.

\textbf{Performance Across Resolutions.} The visual representations in Figure~\ref{fig:FIG_PLOT_LOC} provide a comparison between the reference results and those predicted by OFormer (a non-graph transformer) and \PhysGTN\ models at different input resolutions. 
Qualitatively, the \PhysGTN\  produces heat maps that more closely resemble the reference, especially at the finer $64\times64$ resolution. 
This observation is quantitatively supported by the nRMSE values, where \PhysGTN\ consistently outperforms OFormer. At $32\times32$ resolution, \PhysGTN\ demonstrates a lower nRMSE to OFormer. The trend of superior performance continues at $64\times64$ resolution. These results suggest that \PhysGTN\ is better at capturing the intricacies of the dataset at different resolutions, which could be attributed to its graph-based perspective that may be more adept at encoding spatial relationships within the data.

\textbf{\PhysGTN\ -- Graph or non-Graph, that is the question.} 
We further support our method through Figure~\ref{fig:FIG_VIS_DarcyFlow_V2}.
This shows that a comparison with Geo-FNO underscores the effectiveness of \PhysGTN\, which is also graph-based but exhibits the lowest error among models. This not only highlights the utility of graph-based approaches in capturing the complexity of the data, but also shows the specific strengths of \PhysGTN\ in minimising prediction errors. 
Moreover, Table~\ref{tab:exp_airfoil} presents a concise quantitative analysis of the performance of \PhysGTN\ and OFormer on the Airfoil dataset, characterised by its non-uniform grid structure. The superior performance of \PhysGTN\ in this context not only confirms the adaptability of graph-based approaches to nonuniform grids, but also underscores the potential advantages of such methods over their transformer counterparts in capturing complex spatial relationships inherent in computational fluid dynamics datasets such as the Airfoil dataset.


\begin{table}[t!]
  \centering
  \caption{Ablation studies for graph constructions on Darcy Flow 2D ($\beta=1.0$) in $64 \times 64$ grid.}
  \resizebox{0.49\textwidth}{!}{%
    \begin{tabular}{ccccccrc}
    \toprule
    \multirow{2}[4]{*}{} &       & \multicolumn{4}{c}{KNN}       &       & \multirow{2}[4]{*}{Circular} \\
\cmidrule{3-6}          &       & $k=21$  & $k=51$  & $k=101$ & $k=151$ &       &  \\
\cmidrule{1-1}\cmidrule{3-6}\cmidrule{8-8}    nRMSE &       & 0.02018  & 0.01996  & 0.02054  & 0.02094  &       & \textbf{0.01402 } \\
    \bottomrule
    \end{tabular}%
    }
  \label{tab:abl_graph}%
\end{table}%

%
\begin{table}[t!]
  \centering
  \caption{Quantitative Results for models trained on different dataset sizes.}
    \resizebox{0.49\textwidth}{!}{%
    \begin{tabular}{ccccc}
    \toprule
    \multicolumn{5}{c}{Darcy Flow ($\beta=1.0$), nRMSE} \\
    \midrule
    \#Training Data & 9K    & 5K    & 2K    & 1K \\
    \midrule
    OFormer & 2.048E-02 & 2.093E-02 & 2.674E-02 & 3.321E-02 \\
    \PhysGTN & 1.402E-02 & 1.642E-02 & 2.211E-02 & 2.779E-02 \\
    \midrule
    \multicolumn{5}{c}{Shallow Water, nRMSE} \\
    \midrule
    \#Training Data & 900   & 500   & 200   & 100 \\
    \midrule
    OFormer & 2.900E-03 & 1.190E-02 & 2.310E-02 & 2.910E-02 \\
    \PhysGTN & 2.044E-03 & 2.320E-03 & 3.255E-03 & 4.746E-03 \\
    \bottomrule
    \end{tabular}%
    }
  \label{tab:exp_dsize}%
\end{table}%


%

\textbf{Can Position Encoding Transform Model Performance?} The results of Figure~\ref{fig:FIG_PLOT_ABL}-(B) demonstrate the impact of different position encoding strategies on model performance. The Rotary Position Embedding (RoPE) outperforms other methods, indicating that incorporating relational information between positions can significantly enhance the model's understanding of the spatial structure within the data. Directly concatenating the coordinates to the input features offers a performance benefit over not using any position encoding, but it is still inferior to the relational encoding offered by RoPE. This suggests that while the model can derive some spatial context from direct coordinate information, the rotational invariance and relative positional information encoded by RoPE provide a more nuanced and effective representation. This emphasises the value of sophisticated position encoding mechanisms of our graph framework.

\textbf{Graph Construction Ablation.} Our ablation study for graph construction on the Darcy flow 2D dataset ($\beta$=1.0), indicates a clear advantage of our circular strategy over the traditional k-nearest neighbours (KNN) method. By defining neighbours within a variable radius, our approach dynamically captures the spatial context of the dataset, resulting in a significant performance leap with the lowest nRMSE compared to KNN's. This single-radius, context-aware strategy outperforms the fixed-edge KNN method, confirming the importance of spatial relationships in accurately modelling complex physical phenomena.

\begin{table}[t!]
  \centering
  \caption{Quantitative Results on Airfoil Dataset}
    \setlength{\tabcolsep}{10mm}{%
    \resizebox{0.35\textwidth}{!}{
    \begin{tabular}{cc}
    \toprule
    \multicolumn{2}{c}{Airfoil, Relative $L_2$ Norm} \\
    \midrule
    OFormer & 3.486E-02 \\
    GNOT & 4.310E-02 \\
    EAGLE & 1.192E-01 \\
    \PhysGTN & 3.030E-02 \\
    \bottomrule
    \end{tabular} 
    }
    }
  \label{tab:exp_airfoil}%
\end{table}%

\textbf{Data Size Performance Impact.} We examine the effect of dataset size on model performance. From Table~\ref{tab:exp_dsize}, we can observe that both OFormer and our proposed \PhysGTN\ model exhibit a decline in performance as the amount of training data decreases. This trend is evident in both the Darcy Flow and Shallow Water datasets, with nRMSE values rising as the dataset size reduces from 9K to 1K and from 900 to 100 respectively. Despite this common trend, \PhysGTN\ consistently outperforms OFormer in all datasets, demonstrating its superior ability to generalise from limited data. Particularly in the most constrained scenarios—1K for Darcy Flow and 100 for Shallow Water— \PhysGTN\  demonstrates greater resilience to data scarcity. \PhysGTN\ demonstrates improved performance under data-scarce conditions because it utilises a graph-based approach, which inherently requires fewer data points to model complex relationships effectively compared to traditional methods. Our graph perspective allows the neural operator to infer and generalise from less training data by leveraging the intrinsic geometrical and topological information contained within the graphs, which represent physical domains or conditions in PDEs. This approach enables more efficient learning, particularly advantageous when dealing with limited datasets commonly encountered in PDE scenarios.

\section{Conclusion} 
Our \PhysGTN\ model introduces an advancement in the neural network-based resolution of PDEs, offering a flexible and robust solution adaptable to various geometries and conditions. \PhysGTN\ uses of graph transformers and modular input encoders establishes new benchmarks in the field, particularly in complex scenarios with limited data availability. A limitation is the graph construction time, which is a common aspect in graph-based approaches. However, we view this not as a major constraint but as an inherent step that enables our model's robust performance. The construction of the graph is a crucial phase in which \PhysGTN\ captures the complex dependencies within the data. We believe that the benefits gained in accuracy and adaptability to diverse PDEs far outweigh the computational time required during this stage.
Our extensive testing confirms that \PhysGTN\ not only meets but often exceeds the performance of existing techniques. 
Future work includes integration of Lie-symmetry preservation and augmentation. But also includes extending \PhysGTN\ to handle higher-dimensional PDEs such as 3D problems. This will involve a dedicated exploration to refine the model architecture and compare it with existing approaches, similar to the methodology used by Li et al. for large-scale 3D PDEs \cite{li2023geometryinformed}.

\Jiahao{
}

\section*{Acknowledgments}
AB was supported in part by the Akamai Presidential Fellowship and the Hans-Messer Foundation.
JH and GY were supported in part by the ERC IMI (101005122), the H2020 (952172), the MRC (MC/ PC/21013), the Royal Society (IEC\ NSFC211235), the NVIDIA Academic Hardware Grant Program, the SABER project supported by Boehringer Ingelheim Ltd, Wellcome Leap Dynamic Resilience, and the UKRI Future Leaders Fellowship (MR/V023799/1).
CBS acknowledges support from the Philip Leverhulme Prize, the Royal Society Wolfson Fellowship, the EPSRC advanced career fellowship EP/V029428/1, EPSRC grants EP/S026045/1 and EP/T003553/1, EP/N014588/1, EP/T017961/1,
the Wellcome Innovator Awards 215733/Z/19/Z and 221633/Z/20/Z, CCMI and the Alan Turing Institute.
AAR gratefully acknowledges funding from the Cambridge Centre for Data-Driven Discovery and Accelerate Programme for Scientific Discovery, made possible by a donation from Schmidt Futures, ESPRC Digital Core Capability Award, and CMIH and CCIMI, University of Cambridge.

\section*{Impact Statement}
The impact of our work is observed primarily in the domain of solving partial differential equations (PDEs) using neural networks. Although our work does not directly address specific ethical concerns, such as those related to large language models or image generation, advanced models like \PhysGTN\ can influence a wide range of applications, from climate modelling to biomedical simulations. Therefore, we encourage continued reflection on how advancements in machine learning may shape and be shaped by societal and ethical considerations.
\bibliography{ICML24}
\bibliographystyle{icml2024}

\newpage
\appendix
\onecolumn

\section*{Appendix}
This document builds upon the practical aspects and visual results detailed in the main paper, providing additional insights into our methodology and experimental results. 

\section{Further Numerical \& Visual Results.}~\label{Appendix_Result}
In this section, we extend the numerical and visual comparison of Table \ref{tab:main_nrmse} and Figure \ref{fig:FIG_VIS_DR2D_U} of the main paper.  The RMSE results presented in Table~\ref{tab:main_rmse} illustrate the robustness of the \PhysGTN\ approach across various datasets and complexity levels, consistently showcasing competitive or superior performance compared to existing models.  Particularly in the DarcyFlow datasets with extreme $\beta$ values and the physically intricate Shallow Water and Diffusion Reaction datasets, \PhysGTN 's graph-based methodology excels in capturing complex spatial relationships, often outperforming existing models including a nongraph transformer-based technique.\footnote{The use of GeoFNO for the Darcy Flow dataset does not imply variation in geometry within the Darcy equation itself, as the Darcy Flow problem typically involves a fixed geometry. Instead, the application of GeoFNO likely aims to leverage its learning capacity on complex geometries and could be used to assess the robustness of the model even when the underlying PDE does not exhibit geometric variability. The model's application in this context might be more about demonstrating its potential in handling geometric complexities should they arise in other PDEs, rather than indicating a varying geometry within the Darcy Flow problem itself.} \vspace{-0.3cm}

\begin{table*}[h!]
  \centering
  \caption{Numerical comparison of our approaches vs. existing techniques. The values reflect the RMSE. The best-performing results are highlighted in \colorbox[HTML]{BBFFBB}{green}. }
  \resizebox{\textwidth}{!}{
    \begin{tabular}{cccccccccc}
    \toprule
    \multicolumn{2}{c|}{\textsc{Dataset Setting}} & \multicolumn{8}{c}{RMSE} \\
    \midrule
    Dataset & \multicolumn{1}{c|}{Param.} & U-Net & FNO   & DeepONet & OFormer & GeoFNO & MAgNet &  & \PhysGTN \\ \cmidrule{1-8}\cmidrule{10-10}
    DarcyFlow & \multicolumn{1}{c|}{$\beta = 0.01$}  & 4.00E-03 & 8.00E-03 & 3.31E-03 & \cellcolor[HTML]{BBFFBB}\textbf{2.21E-03} & 2.70E-03 & 8.07E-03 &  & 2.45E-03 \\
    DarcyFlow & \multicolumn{1}{c|}{$\beta = 0.1$}   & 4.80E-03 & 6.20E-03 & 4.88E-03 & 2.55E-03 & 4.15E-03 & 1.05E-02 &  & \cellcolor[HTML]{BBFFBB}\textbf{2.60E-03} \\
    DarcyFlow & \multicolumn{1}{c|}{$\beta = 1.0$}   & 6.40E-03 & 1.20E-02 & 9.65E-03 & 3.00E-03 & 6.20E-03 & 2.90E-02 &  & \cellcolor[HTML]{BBFFBB}\textbf{2.74E-03} \\
    DarcyFlow & \multicolumn{1}{c|}{$\beta = 10.0$}  & 1.40E-02 & 2.10E-02 & 6.79E-02 & 7.32E-03 & 2.08E-02 & 2.18E-01 &  & \cellcolor[HTML]{BBFFBB}\textbf{5.51E-03} \\
    DarcyFlow & \multicolumn{1}{c|}{$\beta = 100.0$} & 7.30E-02 & 1.10E-01 & 6.21E-01 & 4.91E-02 & 1.65E-01 & 2.11E+00 &  & \cellcolor[HTML]{BBFFBB}\textbf{3.37E-02} \\
    \cmidrule{1-8}\cmidrule{10-10}
    Shallow Water & \multicolumn{1}{c|}{--}    & 8.60E-02 & 4.50E-03 & 2.44E-03 & 3.10E-03 & 7.30E-03 & --    &  & \cellcolor[HTML]{BBFFBB}\textbf{2.13E-03} \\
    Diffusion Reaction & \multicolumn{1}{c|}{--}    & 6.10E-02 & 8.10E-03 & 6.46E-02 & 1.26E-02 & 5.20E-02 & --    &  & \cellcolor[HTML]{BBFFBB}\textbf{5.48E-03} \\
    \bottomrule
    \end{tabular}%
    }
  \label{tab:main_rmse}%
\end{table*}%

Figure~\ref{fig:FIG_VIS_SW2D} presents a comparative visualisation of the time evolution predictions for the Shallow Water problem made by OFormer, a transformer-based model, and \PhysGTN\, our graph-based transformer approach, against the reference solution. 
Across the sequence from T=0s to T=1s, \PhysGTN's predictions are more closely aligned with the reference, particularly at later time steps where the complexity of the system behaviour increases. This visual alignment is quantitatively supported by our previous results, which showed \PhysGTN achieving lower error metrics compared to OFormer. The comparison emphasises \PhysGTN 's superior performance in capturing the dynamic nature of the system, likely due to its graph-based structure, which enables a more nuanced interpretation of spatial-temporal data.

Figure~\ref{fig:FIG_VIS_DR2D_V} presents complementary results to Figure~\ref{fig:FIG_VIS_DR2D_U}, offering a comparative visualisation of the time evolution predictions (inhibitor) for the Diffusion Reaction process made by OFormer and our \PhysGTN\ in comparison to the reference solution. We want to note that the unique characteristics of the Diffusion Reaction system, particularly the non-linear reaction terms, lead to complex patterns from initial conditions. \PhysGTN\ architecture is better suited for capturing the intricate dynamics, due to our graph perspective, of this system early on due to its ability to model complex spatial relationships and non-linear interactions more effectively than OFormer, especially under conditions of rapid change and high-frequency information present at the initial time steps. This advantage is particularly evident in the early stages, where the intricate interplay of diffusion and reaction is most challenging to model accurately. Therefore, we intentionally chose to showcase these results to highlight the advantages of our approach.

\begin{figure*}[h!]
  \centering
  \centerline{\includegraphics[width=1\textwidth]{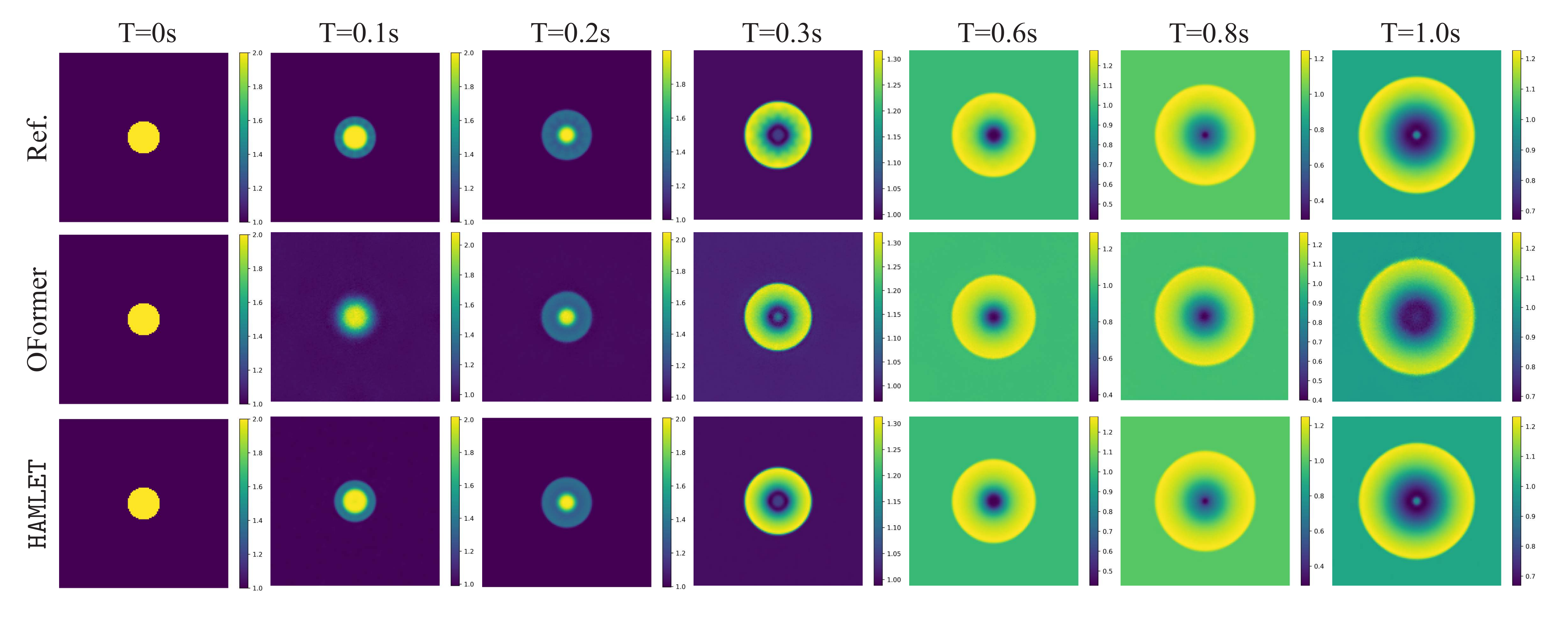}}
  \caption{Time-lapse predictions of a dynamic process by OFormer and \PhysGTN\ compared with the reference solution using Shallow Water 2D. \PhysGTN 's graph-based approach yields closer alignment to the reference.}
  \label{fig:FIG_VIS_SW2D}
\end{figure*}

\begin{figure*}[t!]
  \centering
  \centerline{\includegraphics[width=1\textwidth]{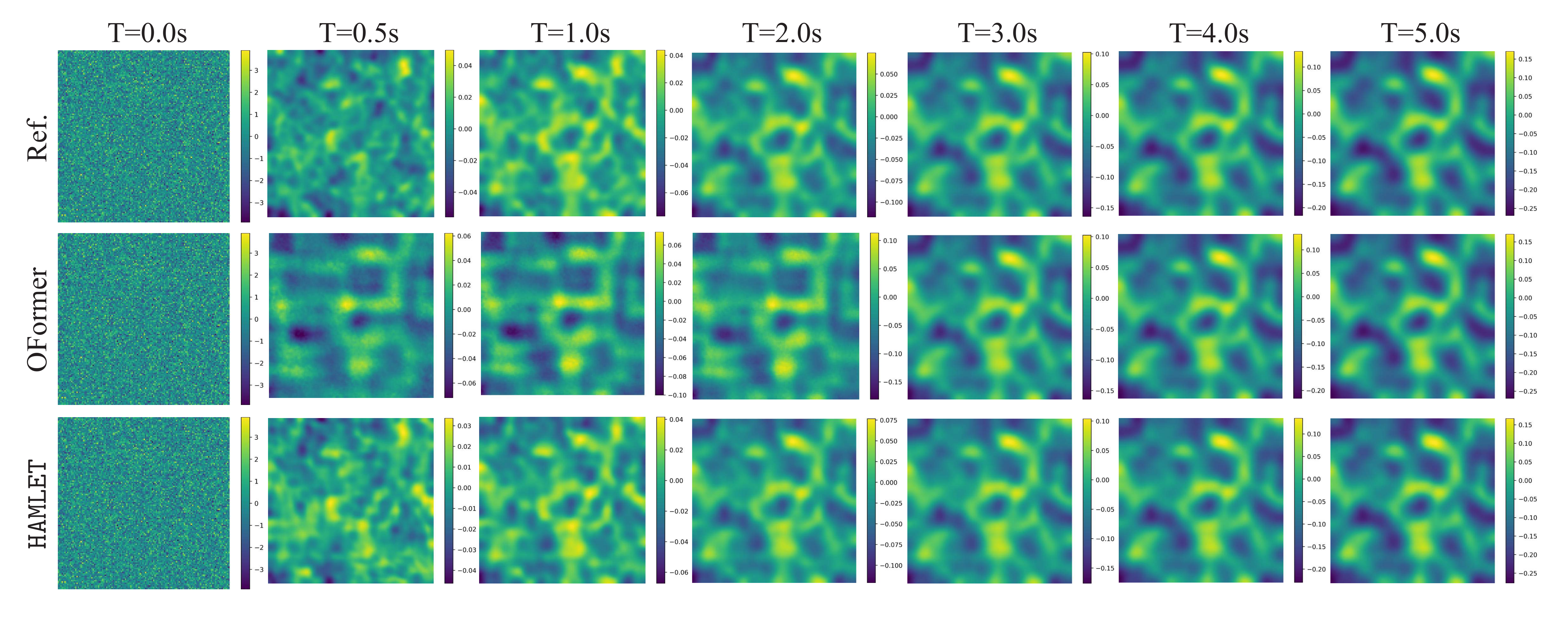}}
  \caption{
  Time evolution (inhibitor) of the diffusion-reaction process as predicted by OFormer and \PhysGTN\ compared with the reference solution. 
  \PhysGTN's predictions closely align with the reference, indicating its strong performance in modelling dynamic PDE systems.
  }
  \label{fig:FIG_VIS_DR2D_V}
\end{figure*}

\section{Discretisation Invariance \& Further Definitions.}~\label{Appendix_Discretisation}

\fcircle[fill=wine]{2.5pt} \textbf{Discretisation Invariance.} Suppose $\mathcal{P}$ is a Banach space of $\mathbb{R}^s$-valued functions on the domain $\mathcal{D} \subset \mathbb{R}^n$. Let $\mathcal{S}: \mathcal{D} \rightarrow \mathcal{V}$ be an operator, $\mathcal{D}_L$ be an $L$-point discretisation of $\mathcal{D}$, and $\tilde{\mathcal{S}}_\mu$ the approximate solution operator. For any compact $K \subset \mathcal{P}$, one can define the maximum offset of the operators following the ideas of Kovachki et al.~\cite{kovachki_neural_2022}:
\begin{equation}
\label{eq:MaximumOffset}
R_K\left(\mathcal{S}, \tilde{\mathcal{S}}_\mu, \mathcal{D}_L\right)=\sup _{\theta \in K}\left\|\hat{\mathcal{S}}\left(\mathcal{D}_L,\left.\theta\right|_{\mathcal{D}}\right)-\mathcal{S}(\theta)\right\|_{\mathcal{V}} 
\end{equation}
Given a set of discretisations $\{\mathcal{D}^{(i)}\}_{i=1}^{\infty}$ with the condition $\mathcal{D}^{(i)}\subset \mathcal{D}^{(j)}$ for $j>i$ of the domain $\mathcal{D} \subset \mathbb{R}^n$ such that $\mathcal{D} \subset\cup_{i=1}^{\infty} \mathcal{D}^{(i)}$, we say $\tilde{\mathcal{S}}_\mu$ is discretisation-invariant if the offset tends to zero when evaluating and training the neural operator $\tilde{\mathcal{S}}^{(i)}$ on other $N_i$ discretisations, for any $\theta \in \mathcal{P}$ and any compact set $K \subset \mathcal{P}$:
\begin{equation}
\label{eq:LimitMaximumOffset}
\lim _{N \rightarrow \infty} R_K\left(\mathcal{S}(\cdot, \theta), \tilde{\mathcal{S}}^{(N)}(\cdot, \cdot, \theta), \mathcal{D}^{(N)}\right)=0  
\end{equation}
To show that a neural network architecture is numerically discretisation-invariant, we must show that the approximation error is approximately constant as we refine the discretisation.

\fcircle[fill=wine]{2.5pt} \textbf{Neural Operator.}\begin{definition}[Neural operator $\tilde{\mathcal{S}}_\mu$]
Define the neural operator
\begin{equation}
\label{eq:NeuralOperatorDefinition}
\tilde{\mathcal{S}}_\mu:=\mathcal{Q} \circ\left(W_L+\mathcal{K}_L+b_L\right) \circ \cdots \circ \sigma\left(W_1+\mathcal{K}_1+b_1\right) \circ \mathcal{P}
\end{equation}
where $\mathcal{P}: \mathbb{R}^{d_a} \rightarrow \mathbb{R}^{d_1}, \mathcal{Q}: \mathbb{R}^{d_L} \rightarrow \mathbb{R}^{d_u}$ are the pointwise neural networks that encode the lower dimension function into higher dimensional space and vice versa. The model stack $L$ layers of $\sigma\left(W_l+\mathcal{K}_l+b_l\right)$ where $W_l \in \mathbb{R}^{d_{l+1} \times d_l}$ are pointwise linear operators (matrices), $\mathcal{K}_l:\{\mathcal{D} \rightarrow$ $\left.\mathbb{R}^{d_l}\right\} \rightarrow\left\{\mathcal{U} \rightarrow \mathbb{R}^{d_{l+1}}\right\}$ are integral kernel operators, $b_l: D \rightarrow \mathbb{R}^{d_{l+1}}$ are the bias terms made of a linear layer, and $\sigma$ are fixed activation functions. The parameters $\mu$ consist of all the parameters in $\mathcal{P}, \mathcal{Q}, W_l, \mathcal{K}_l, b_l$
\end{definition}

\section{Further Details on the Datasets \& PDEs.}\label{Appendix_Data}

\fcircle[fill=wine]{2.5pt} \textbf{Darcy Flow.}
The 2D Darcy Flow equation is defined by:
\begin{equation}
\label{eq:DarcyFlowEquation}
\begin{aligned}
    -\nabla(a(x, y) \nabla u(x, y)) & =f(x, y), & (x, y) \in \Omega, \\
    u(x, y) & =0, &(x, y) \in \partial\Omega,
\end{aligned}
\end{equation}
where $a(x, y)$ and $u(x, y)$ are the diffusion coefficient and the solution respectively, which are defined on a square domain $\Omega = (0, 1)^2$. The force term $f(x)$ is set to a constant value $\beta$, which affects the scale of the solution $u(x)$.

We conducted experiments on the steady-state solution of 2D Darcy Flow over the uniform square. 
The objective solution operator $\mathcal{S}$ is defined as follows:
\begin{equation}
\label{eq:DarcyFlowOp}
\begin{aligned}
    \mathcal{S}: a \mapsto u, \quad (x, y) \in \Omega,
\end{aligned}
\end{equation}
where $a(x, y)$ and $u(x, y)$ are the diffusion coefficient and the solution respectively, which are defined on a square domain $\Omega = (0, 1)^2$.
Following the PDEBench~\cite{takamoto2022pdebench} protocol, five subsets with constant value force term $\beta=0.01, 0.1, 1.0, 10.0, 100.0$ were used in the experiment section, each of which contains 9000/1000 samples for training/testing sets. All data samples are officially discretised and $\times 2$ subsampled in a spatial resolution of $64 \times 64$.

\fcircle[fill=wine]{2.5pt} \textbf{Shallow-Water.} 
The 2D Shallow Water equation is written as follows:
\begin{equation}
\label{eq:ShallowWaterEquation}
\begin{aligned} 
    \partial_t h+\partial_x h u+\partial_y h v &=0, \\ 
    \partial_t h u+\partial_x\left(u^2 h+\frac{1}{2} g_r h^2\right) &=-g_r h \partial_x b, \\ 
    \partial_t h v+\partial_y\left(v^2 h+\frac{1}{2} g_r h^2\right) &=-g_r h \partial_y b,
\end{aligned}
\end{equation}
where $u := u(x, y, t)$ and $v := v(x, y, t)$ describes the velocities in horizontal and vertical direction and $h := h(x, y, t)$ describes the water depth. $b := b(x, y)$ and $g_r$ denote spatially variable bathsymetry and gravitational acceleration, respectively. 
This specific dataset describes a 2D radial dam break scenario on a square domain $\Omega=[-2.5, 2.5]^2$ and time range $t \in [0, 1]$.
with the following initialisation setting:
\begin{equation}
\label{eq:ShallowWaterInit}
\begin{aligned} 
    h(t=0, x, y)= \begin{cases}2.0, & \text { for } r<\sqrt{x^2+y^2} \\ 1.0, & \text { for } r \geq \sqrt{x^2+y^2}\end{cases}
\end{aligned}
\end{equation}
where the radius $r \sim \mathcal{D}(0.3,0.7)$.

We conducted experiments on the time-dependent system of 2D Shallow Water equations, providing an appropriate model for analysing problems related to free-surface flows. 
The objective solution operator $\mathcal{S}$ is defined as follows:
\begin{equation}
\label{eq:ShallowWaterOperator}
\begin{aligned} 
    \mathcal{S}: h|_{t \in[0, t^\prime]} \mapsto h|_{t \in (t^\prime, T]}, \quad (x, y) \in \Omega,
\end{aligned}
\end{equation}
where $t^\prime = 0.009s$ and $T = 1.000s$, $\Omega=[-2.5, 2.5]^2$. $h := h(x, y, t)$ describes the depth of the water. 

Each sample in the dataset is officially discretised into a spatial resolution of $128 \times 128$ and a time resolution of $101$ respectively, where the first 10 time points were used as input and the rest of 91 as target. Following the PDEBench~\cite{takamoto2022pdebench} protocol, the dataset contain 900/100 for training/testing.

\fcircle[fill=wine]{2.5pt} \textbf{Diffusion Reaction.} 
The Diffusion Reaction equations are rewritten as follows:
\begin{equation}
\label{eq:DiffusionReactionEquation}
\begin{aligned} 
    \partial_t u &= D_u \partial_{x x} u+D_u \partial_{y y} u+R_u, \\
    \partial_t v &= D_v \partial_{x x} v+D_v \partial_{y y} v+R_v,
\end{aligned}
\end{equation}
where the activator $u = u(x, y, t)$ and the inhibitor $v = v(x, y, t)$ are two non-linearly coupled variables. The diffusion coefficients for the activator and inhibitor are $D_u = 1 \times 10^{-3}$ and $D_v = 5 \times 10^{-3}$, respectively.
The activator and inhibitor reaction functions are defined as follows:
\begin{equation}
\label{eq:DiffusionReactionEquation2}
\begin{aligned} 
    R_u(u, v)=u-u^3-k-v, \qquad
    R_v(u, v)=u-v,
\end{aligned}
\end{equation}
where $k = 5 \times 10^{-3}$.
The simulation domain includes $\Omega=[-1, 1]^2$ and $t \in [0, 5]$.

The objective solution operator $\mathcal{S}$ is defined as follows:
\begin{equation}
\label{eq:DiffusionReactionOperator}
\begin{aligned} 
    \mathcal{S}: \{u, v\}|_{t \in[0, t^\prime]} \mapsto \{u, v\}|_{t \in (t^\prime, T]}, \quad (x, y) \in \Omega,
\end{aligned}
\end{equation}
where $t^\prime = 0.045s$, $T = 5.000s$ and $\Omega=[-1, 1]^2$. $u := u(x, y, t)$ and $v := v(x, y, t)$ is the activator and inhibitor. 
Similarly to the Shallow Water dataset, each sample is officially discretised and subsampled at a spatial resolution of $128 \times 128$ and a time resolution of $101$ (10 for input and 91 for target).
Following the PDEBench~\cite{takamoto2022pdebench} protocol, the dataset contain 900/100 for training/testing.

\fcircle[fill=wine]{2.5pt} \textbf{Airfoil.} 
The 2D time-dependent compressible flow for Airfoil problem is written as follows:
\begin{equation}
\label{eq:AirfoilEquation}
\begin{aligned} 
    \partial_t \rho+\nabla \cdot(\rho \mathbf{u}) &= f_1, \\
    \partial_t(\rho \mathbf{u})+\nabla \cdot(\rho \mathbf{u} \otimes \mathbf{u}+p \mathbb{I}) &= \mathbf{f}_2, \\
    \partial_t(\rho E)+\nabla \cdot(\rho E \mathbf{u}+p \mathbf{u}) &= f_3,
\end{aligned}
\end{equation}
where $\mathbf{u}:=\mathbf{u}(x, t)$, $\rho:=\rho(x, t)$ and $p:=p(x, t)$ are the velocity field, density and pressure respectively. $E$ denotes the total energy per unit mass. and $f_1$, $\mathbf{f}_2$, $f_3$ are generic source terms. The simulation domain includes $x \in \Omega$, $t \in[0, 4.800]$.
The dataset we use is pregenerated from~\cite{pfaff2021learning}~\footnote{https://github.com/deepmind/deepmind-research/tree/master/meshgraphnets} and preprocessed by~\cite{li_transformer_2022}~\footnote{https://github.com/BaratiLab/OFormer}.

The objective solution operator $\mathcal{S}$ is defined as follows:
\begin{equation}
\label{eq:AirfoilOperator}
\begin{aligned} 
    \mathcal{S}: \{\mathbf{u}, \rho, p\}|_{t \in[0, t^\prime]} \mapsto \{\mathbf{u}, \rho, p\}|_{t \in (t^\prime, T]}, \quad (x, y) \in \Omega,
\end{aligned}
\end{equation}
where $t^\prime = 0.576s$ and $T = 4.800s$. $\mathbf{u} := \mathbf{u}(x, y, t)$ is the velocity field, $\rho := \rho(x, y, t)$ is the density, and $p := p(x, y, t)$ is the pressure. 
Following the settings from~\cite{pfaff2021learning}, 1000/100 samples are included for training and testing, respectively. Each data sample is discretised into 5,233 irregular grid, and 101 time points, which are further temporally undersampled to 26 time points (4 for input and 22 for target).

\section{Further Details on Model Architecture \& Implementations}~\label{Appendix_Impl}
Table~\ref{tab:implementation} provides a detailed breakdown of the implementation specifics of our model trained on various datasets, including Darcy Flow 2D, Shallow Water 2D, Diffusion Reaction 2D, and Airfoil. The table also outlines the dataset settings, indicating the amount of training and testing data used, spatial and temporal resolution, and specific graph-related parameters such as radius and node feature channels, highlighting the bespoke nature of the model configuration. The dataset-specific hyperparameters follow the PDEBench \cite{takamoto2022pdebench} setting, while model-specific hyperparameters follow the default setting of baseline methods suggested by the code repositories or their papers. \\ \\
We have presented results of our \PhysGTN\ on a wide range of datasets, some of which are larger and more complex, e.g., larger spatial and temporal resolution on Diffusion Reaction 2D. On these datasets, there are no scalability issues found in our \PhysGTN\, implying that our method does not suffer from the scalability issue. To indicate computational complexity, we use the inference time\footnote{Inference time is measured as an average of 50 runs, with a batch size of 1, on an NVIDIA RTX3090. We use inference time to indicate computational complexity, since FLOPs are not constant for our proposed model, as they partly depend on the number of edges.}.We directly train \PhysGTN\ unrolling for 90 timestamps.

\begin{table}[htbp]
  \centering
  \caption{The implementation details of \PhysGTN\ }
    \begin{tabular}{c|cccc}
    \toprule
    Dataset & Darcy Flow 2D & Shallow Water 2D & Diffusion Reaction 2D & Airfoil \\
    \midrule
    \multicolumn{5}{c}{Learning Rate (LR) Parameter} \\
    \midrule
    Initial LR & 0.0001 & 0.0001 & 0.0001 & 0.0001 \\
    Schedule & OneCycleLR & OneCycleLR & OneCycleLR & OneCycleLR \\
    "div\_factor" & 20    & 20    & 20    & 20 \\
    "pct\_start" & 0.05  & 0.05  & 0.05  & 0.05 \\
    "final\_div\_factorr" & 1000  & 1000  & 1000  & 1000 \\
    \midrule
    \multicolumn{5}{c}{Optimisation Parameter} \\
    \midrule
    Optimiser & Adam  & Adam  & Adam  & Adam \\
    Data Loss Type & MSE   & MSE   & MSE   & Reletive L2 Norm \\
    Data Loss Weight & 1.0   & 1.0   & 1.0   & 1.0 \\
    \midrule
    \multicolumn{5}{c}{ Encoder (Input) Architecture} \\
    \midrule
    Hidden Dim (EncI) & 96    & 192   & 192   & 192 \\
    \#Blocks -- Graph Transformer & 10    & 6     & 6     & 6 \\
    \#Head & 8     & 4     & 4     & 4 \\
    \midrule
    \multicolumn{5}{c}{Encoder (Query) Architecture} \\
    \midrule
    Hidden Dim (EncQ) & 96    & 192   & 192   & 192 \\
    \#\#Linear Layer -- MLP & 2     & 2     & 2     & 2 \\
    \midrule
    \multicolumn{5}{c}{Decoder Architecture} \\
    \midrule
    Hidden Dim (Dec) & 256   & 512   & 512   & 512 \\
    \#Linear Layer -- Output-MLP & 2     & 3     & 3     & 3 \\
    \#Linear Layer -- Propagator-MLP & N.A.  & 3     & 3     & 3 \\
    \#Head - Cross Attention & 8     & 4     & 4     & 4 \\
    \midrule
    \multicolumn{5}{c}{Dataset Setting (Benchmark Setting)} \\
    \midrule
    \#Training Data & 9000  & 900   & 900   & 1000 \\
    \#Testing Data & 1000  & 100   & 100   & 100 \\
    Spatial Res. & 64 × 64 & 128 × 128 & 128 × 128 & N.A. \\
    Temporal Res. & N.A.  & 101   & 101   & 26 \\
    Length -- Input Seq. & N.A.  & 10    & 10    & 4 \\
    Length -- Output Seq. & N.A.  & 91    & 91    & 22 \\
    Graph - Radius  & 0.1   & 0.1   & 0.08  & 0.001 \\
    Graph - Node Feature Chnl. & 3     & 12    & 22    & 18 \\
    Number of Parameters & 3.16M & 4.29M & 4.30M & 5.16M \\
    Inference Time & 0.306s & 0.313s & 0.378s & 0.094s \\
    \bottomrule
    \end{tabular}%
  \label{tab:implementation}%
\end{table}%


\end{document}